 \documentclass[smallextended]{svjour3}   
\usepackage{graphicx}    
\usepackage{amsmath}
\usepackage[sort, numbers]{natbib}
\usepackage{url}
 \usepackage{setspace}
 \usepackage{booktabs}
\usepackage{siunitx}
 \usepackage{color,caption,subcaption}
\captionsetup{compatibility=false}
\usepackage{mathtools}
 \mathtoolsset{showonlyrefs}
 \usepackage{algorithmicx,algpseudocode}
\usepackage{amsmath,amsfonts,amssymb}
 \newcommand{\argmin}{\operatornamewithlimits{argmin}}

\DeclareMathOperator*{\argmax}{arg\,max}

 \newcounter{algno} 
 \setcounter{algno}{0}

 \usepackage{resizegather}
 \usepackage{multirow}
\begin{document}

\title{Distributed Planning for Serving Cooperative Tasks with Time Windows: A Game Theoretic Approach 
}

\titlerunning{Distributed Planning for Serving Cooperative Tasks with Time Windows}        

\author{Yasin Yaz{\i}c{\i}o\u{g}lu  \and \\
       Raghavendra Bhat \and \\ Derya Aksaray 
}


\institute{Yasin Yaz{\i}c{\i}o\u{g}lu  and Raghavendra Bhat are with Department of Electrical and Computer Engineering, University of Minnesota,
Minneapolis, MN, 55455, U.S.A. \at
              \email{ayasin@umn.edu, bhatx075@umn.edu}           
           \and
           Derya Aksaray is with the Department of Aerospace Engineering and Mechanics, University of Minnesota,
Minneapolis, MN, 55455, U.S.A.  \at
             \email{daksaray@umn.edu}  
}
\date{Received: date / Accepted: date}

\maketitle

\begin{abstract}
We study distributed planning for multi-robot systems to provide optimal service to cooperative tasks that are distributed over space and time. Each task requires service by sufficiently many robots at the specified location within the specified time window. Tasks arrive over episodes and the robots try to maximize the total value of service in each episode by planning their own trajectories based on the specifications of incoming tasks. Robots are required to start and end each episode at their assigned stations in the environment. We present a game theoretic solution to this problem by mapping it to a game, where the action of each robot is its trajectory in an episode, and using a suitable learning algorithm to obtain optimal joint plans in a distributed manner. We present a systematic way to design minimal action sets (subsets of feasible trajectories) for robots based on the specifications of incoming tasks to facilitate fast learning. We then provide the performance guarantees for the cases where all the robots follow a best response or noisy best response algorithm to iteratively plan their trajectories. While the best response algorithm leads to a Nash equilibrium, the noisy best response algorithm leads to  globally optimal joint plans with high probability. We show that the proposed game can in general have arbitrarily poor Nash equilibria, which makes the noisy best response algorithm preferable unless the task specifications are known to have some special structure. We also describe a family of special cases where all the equilibria are guaranteed to have bounded suboptimality. Simulations and experimental results are provided to demonstrate the proposed approach.

\end{abstract}

\section{Introduction}
Multi-robot systems have proven to be effective in various applications such as precision agriculture, environmental monitoring, surveillance, search and rescue, manufacturing, and warehouse automation (e.g., \cite{gonzalez2017fleets, Seyedi19,li2019multi,kapoutsis2017darp, gombolay2018fast, claes2017decentralised}). In many of these applications, the robots need to serve some cooperative tasks that arrive at certain locations during specific time windows. One major requirement for achieving the optimal team performance in such scenarios is to have properly coordinated plans (trajectories) so that sufficiently many robots are present at the right locations and times. 


Multi-robot planning is usually achieved via approximate or distributed algorithms since exact centralized solutions become intractable due to the exponential growth of the joint planning space (e.g., \cite{peasgood2008complete,yu2016optimal}). One of the standard multi-robot planning problems is to reach the goal regions while minimizing the travel time or the distance traveled subject to the constrains such as collision avoidance (e.g., \cite{guo2002distributed, bennewitz2001optimizing}). There are also studies on distributed planning problems where the robots should visit some specified locations before reaching their goal regions (e.g., \cite{Bhattacharya10,Thakur13}). Furthermore, some studies have focused on the planning of robot trajectories to satisfy complex specifications represented as temporal logics (e.g., \cite{ulusoy2013optimality,kress2009temporal,Peterson20, buyukkocak2021planning}). There is also a rich literature on related problems such as vehicle routing, scheduling, and assignment. In vehicle routing problems, a typical objective is to ensure that a given set of locations are visited during the specified time windows subject to various constraints on the vehicles (e.g. \cite{braysy2005vehicle,cordeau2001unified,bullo2011dynamic,Arsie09,aksaray2016dynamic}). On the other hand, scheduling and assignment problems are mainly concerned with the optimal processing of tasks by a number of servers (e.g., \cite{andersson2006multiprocessor,dai2019task,michael2008distributed,gombolay2018fast,Arslan07,nunes2017taxonomy,wang2020coupled}). These problems are typically NP-hard and solved via approximation algorithms. 
In this paper, we investigate a \emph{distributed  task execution} (DTE) problem, where a homogeneous team of mobile robots need to plan their trajectories in a distributed manner to optimally serve the cooperative tasks that arrive over episodes. Each task requires service at the corresponding location by sufficiently many robots within the specified time window. Robots aim to maximize the value of completed tasks by each of them planning its own trajectory, which must start and end at the assigned station in the environment. One of the main differences of this problem from standard planning and vehicle routing problems is that the tasks may demand more complex coordination in robot trajectories rather than being instantaneously completed when their locations are visited by one of the robots. For example, tasks may require multiple robots for multiple time steps, may be preemptable or non-preemptable, may involve complex preconditions (e.g., first a certain number of robots should together move a heavy object, then a single robot or multiple robots can complete the task). Accordingly, the proposed problem formulation can accommodate a wide range of cooperative tasks with time windows.

This paper proposes a game theoretic solution to the DTE problem by designing a corresponding game and utilizing game theoretic learning to drive the robots to joint plans that maximize the global objective function, i.e., the total value of service provided to the tasks. Similar game theoretic formulations were presented in the literature to achieve coordination in problems such as vehicle-target assignment (e.g., \cite{Arslan07}), coverage optimization (e.g., \cite{Yazicioglu13NECSYS,Yazicioglu17TCNS,Zhu13}), and dynamic vehicle routing (e.g., \cite{Arsie09}). In our proposed method, we map the DTE problem to a game where the action of each robot is defined as its trajectory (plan) in an episode. We show that some feasible trajectories can never contribute to the global objective in this setting, irrespective of the trajectories of other robots. For any given set of tasks, by excluding such inferior trajectories, we obtain a game with a minimal action space that not only contains globally optimal joint plans but also facilitates fast learning. We then provide the performance guarantees for the cases where all the robots follow a best response or noisy best response algorithm to iteratively plan their trajectories in a distributed manner. While the best response algorithm leads to a Nash equilibrium, the noisy best response algorithm leads to globally optimal joint plans with high probability when the noise is small. We then show that the resulting game can in general have arbitrarily poor Nash equilibria, which makes the noisy best response algorithm preferable unless the task specifications have some special structure. We also describe a family of special cases where all the Nash equilibria are guaranteed to be near-optimal and the best response algorithm may be used to monotonically improve the joint plan and reach a near-optimal solution. Finally, we present simulations and experiments to demonstrate the proposed approach. 

This paper is a significant extension of our preliminary work in \cite{Bhat19} with the following main differences: 1) We extend the problem formulation to accommodate a broader range of tasks compared to \cite{Bhat19}, which only considered tasks that can be completed in one time step when there are sufficiently many robots. In this modified setting, the tasks are also allowed to change over time and their specifications are available to the robots before each episode. 2) We facilitate faster learning by designing significantly smaller action spaces based on the specifications of incoming tasks. We use different learning algorithms, define  the information needed by each agent to follow these algorithms, and provide a price of anarchy analysis.
3) We provide new theoretical results, numerical simulations, and experiments on a team of drones.

The organization of this paper is as follows: Section \ref{prob} presents the DTE problem. Section \ref{prelim} provides some game theory preliminaries. Section \ref{design} presents the game design. Section \ref{learn} is on the learning dynamics and performance guarantees. Simulation results are presented in Section \ref{sims}. Section \ref{exp-res} presents the experiments on a team of quadrotors. Finally, Section \ref{conc} concludes the paper.

\section{Problem Formulation}
\label{prob}
This section presents the distributed task execution (DTE) problem, where a homogeneous team of $n$ mobile robots, $R=\{r_1, r_2, \hdots, r_n\}$, need to plan their trajectories in each episode to optimally serve the incoming cooperative tasks with time windows. 
\subsection{Notation}
We use $\mathbb{Z}$ (or $\mathbb{Z}_+$) to denote the set of integers (or positive integers) and $\mathbb{R}$ (or $\mathbb{R}_+$) to denote the set of real (or positive real) numbers. For any pair of vectors $x,y \in \mathbb{R}^n$, we use $x \leq y$ (or $x<y$) to denote the element-wise inequalities, i.e., $x_i \leq y_i$ (or $x_i < y_i$) for all ${i=1,2, \hdots, n}$.

\subsection{Formulation}

We consider a discretized environment represented as a 2D grid, 
\begin{equation}
P=\{1,2, \hdots , \bar{x}\} \times \{1,2, \hdots , \bar{y}\},
\end{equation}
where $\bar{x},\bar{y} \in \mathbb{Z}_+$ denote the number of cells along the corresponding directions. In this environment, some cells may be occupied by static obstacles and the robots are free to move over the feasible cells ${P_F \subseteq P}$. Each cell in the grid represents a sufficiently large space that can accommodate any number of robots at the same time. There are stations located at a subset of the feasible cells ${S \subseteq  P_F}$. Robots recharge and get ready for the next episode at their assigned stations. Each robot is assigned to a specific station (multiple robots can be assigned to the same station), where its trajectory must start and end in each episode. 

Each episode consists of $T$ time steps and the trajectory of each robot $r_i \in R$  over an episode is denoted as ${\mathbf{p}_i=\{p_i^0, p_i^1, \hdots, p_i^T \}}$. The robots can move to any of the feasible neighboring cells within one time step. Accordingly, when a robot is at some cell $p=(x,y)\in P_F$, at the next time step it has to be within $p$'s neighborhood on the grid $N(p)\subseteq P_F$, which is given as
\begin{equation}
    \label{motion}
    N(p)= \{(x',y') \in P_F \mid |x'-x| \leq 1, |y'-y| \leq 1\}.
\end{equation}
 For any $r_i\in R$, the set of feasible trajectories, $\mathbf{P}_i$, is 
\begin{equation}
    \label{traj-set}
    \mathbf{P}_i= \{\mathbf{p}_i \mid p_i^0=p_i^T=\sigma_i, \; p_i^{t+1} \in N(p_i^t), \;  \forall t<T \},
\end{equation}
where $\sigma_i \in S$ is the location that contains the station of robot $r_i$. { As per \eqref{traj-set}, a trajectory $\mathbf{p}_i$ is feasible if it satisfies two conditions: 1) it starts and ends at the assigned station $\sigma_i$, and 2) each position along the trajectory is in the neighborhood of preceding position.} The Cartesian product of the sets of feasible trajectories is denoted as $\mathbf{P}=\mathbf{P}_1 \times \hdots \times \mathbf{P}_n$.
A sample environment with some obstacles and three stations is illustrated in Fig. \ref{env}.

\begin{figure}[htb]
\centering
\includegraphics[trim =0mm 0mm 0mm 0mm, clip,scale=0.6]{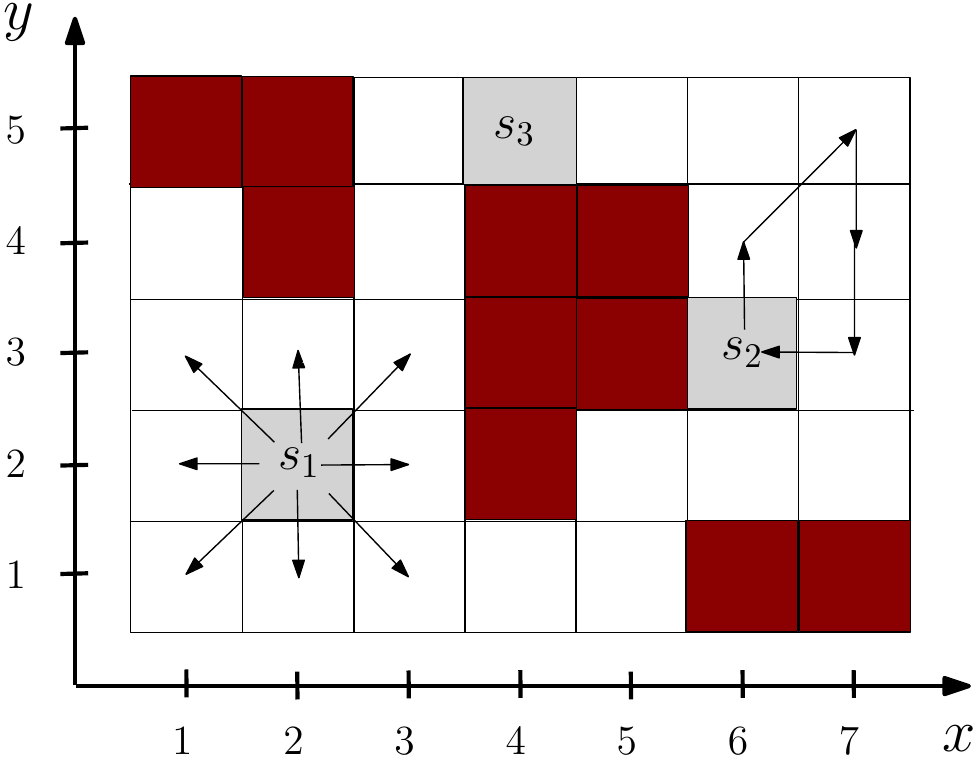}
\caption{A sample environment with obstacles (red) and three stations (gray). Robots can move to any neighboring cell in one time step. The neighboring cells of $s_1$ are shown on the left. A feasible trajectory of length five that starts and ends at $s_2$ is shown on the right. }
\label{env}
\end{figure}

A new set of tasks $\tau=\{\tau_1, \tau_2, \hdots, \tau_m\}$ is received in each episode. Each task is defined as a tuple, $\tau_i= \{\ell_i, t^a_i,t^d_i,v_i\}$,
where ${\ell_i \in P_F}$ is the location, ${t^a_i<t^d_i \in \{0,\hdots,T\}}$ are the arrival and departure times (time window),  and the value function $v_i$ is a mapping from the numbers of robots serving the task during the time window $\{t^a_i,\hdots,t^d_i-1\}$ to the resulting value. In order to serve a task, a robot should spend at least one time step at that location, i.e., $p_i(t)=p_i(t+1)=\ell_i$, during the time window. We refer to each repetition of position, $p_i(t)=p_i(t+1)$, along a trajectory as a \emph{stay}. In this setting, multiple tasks can arrive at the same location in an episode, but we assume\footnote{{ This assumption is only made to simplify the notation and presentation in our derivations. In Section \ref{mtaskext}, we discuss how this assumption can be lifted to use our proposed approach in cases where multiple tasks are simultaneously active at the same location.}} that the time windows of such tasks with identical locations do not overlap, i.e., 
\begin{equation}
\label{nooverlap}
\ell_i = \ell_j \Rightarrow \min \{t^d_i,t^d_j\} \leq \max\{t^a_i, t^a_j\}, \forall i \neq j.
\end{equation}
Accordingly, the number of robots staying at each location uniquely determines their impact on performance since they all serve the same task (if there is one).  For any  ${\mathbf{p} \in \mathbf{P}}$ and $\tau_i \in \tau$, we use $c_i(\mathbf{p},t) \in \mathbb{Z}$ to denote the number of robots that stay at the location of the task, $\ell_i$, from time $t$ to $t+1$, i.e., 
\begin{equation}
\label{rtq}
c_i(\mathbf{p},t) =\big|\{r_j \in R \mid p_j^t = p_j^{t+1} = \ell_i\} \big|.
\end{equation}
 We use $\mathbf{c}_i(\mathbf{p})$ to denote the vector of counter values during the time window of the task $\tau_i$, i.e.,
\begin{equation}
\label{rtq2}
\mathbf{c}_i(\mathbf{p})=[c_i(\mathbf{p},t^a_i), \hdots, c_i(\mathbf{p},t^d_i-1)]^\text{T}, 
\end{equation}
and the resulting value from the task is $v_i(\mathbf{c}_i(\mathbf{p})) \in [0,\bar{v}_i]$, where $\bar{v}_i \in \mathbb{R}_+$ is the maximum value that can be obtained from the task (e.g., when the task is completed as desired). To accommodate various types of cooperative tasks, we do not make any assumptions on the value functions $v_i$ except for the mild assumption that having more robots can never hurt the outcome, i.e., 
\begin{equation}
\label{moreok}
\mathbf{c}_i(\mathbf{p}) \geq \mathbf{c}_i(\mathbf{p}') \Rightarrow v_i(\mathbf{c}_i(\mathbf{p})) \geq v_i(\mathbf{c}_i(\mathbf{p}')).
\end{equation}
In the remainder of the paper, we will say that task $\tau_i$ is completed if it yields the maximum value $\bar{v}_i$. While we will use tasks with binary value functions (0 or $\bar{v}_i$) in our examples for simplicity, our methods are applicable to tasks with more generic value functions with higher resolution. Under this model, the robots are assumed to be capable of achieving the required low-level coordination for each task (e.g., moving an object together). Accordingly, each task is completed if it is served by sufficiently many robots during the corresponding time window. We quantify the performance in an episode via the total value from tasks, i.e.,
\begin{equation}
\label{obj}
f(\mathbf{p})= \sum_{\tau_i \in \tau}v_i(\mathbf{c}_i(\mathbf{p})),
\end{equation}
where $\mathbf{p} \in \mathbf{P}$ denotes the trajectories of all robots. 
In each episode, the robots plan their own trajectories to maximize \eqref{obj}. Such a distributed coordination problem can be solved by utilizing methods from machine learning, optimization, or game theory (e.g., \cite{Bu08,Boyd11,Marden09}). In this paper, we will study this problem from a game theoretic perspective. 

\section{Game Theory Preliminaries}
\label{prelim}
A finite \emph{strategic game} $\Gamma= (I, A, U)$ is defined by three elements: (1) the set of \emph{agents (players)} $I=\{1, 2, \hdots, n\}$, (2) the \emph{action space} $A= A_1 \times A_2 \times \hdots\ \times A_n$, where each $A_i$ is the \emph{action set} of agent $i$, and (3) the set of \emph{utility functions} $U= \{U_1, U_2, \hdots , U_n\}$, where each $U_i : A \mapsto \mathbb{R}$ is a mapping from the action space to the set of real numbers. Any \emph{action profile} $a \in A$ can be represented as ${a=(a_i,a_{-i})}$, where $a_i \in A_i$ is the action of agent $i$ and $a_{-i}$ denotes the actions of all other agents. 
An action profile $a^* \in A$ is a \emph{Nash equilibrium} if no agent can increase its own utility by unilaterally changing its action, i.e.,
\begin{equation}
\label{NE}
 U_i(a_i^*, a_{-i}^*) = \max_{a_i \in A_i} U_i(a_i,a_{-i}^*), \; \forall i \in I.
\end{equation}
 A game is called a \emph{potential game} if there exists a function, $\phi : A \mapsto \mathbb{R}$, such that for each player $i \in I$, for every $a_i, a'_i \in A_i$, and for all $a_{-i} \in A_{-i}$,  
\begin{equation}
\label{PG}
 U_i(a'_i, a_{-i})- U_i(a_i, a_{-i}) = \phi(a'_i, a_{-i})- \phi(a_i, a_{-i}).
\end{equation}

{ Accordingly, whenever an agent unilaterally changes its action in a potential game, the resulting change in its own utility equals the resulting change in $\phi$, which is called the \emph{potential function} of the game. }

In game theoretic learning, the agents start with arbitrary initial actions and follow a learning algorithm to update their actions in a repetitive play of the game. At each round $k \in \{0, 1, 2, \hdots \}$, each agent $i \in  I$ plays an action $a_i(k)$ and receives the utility $U_i(a(k))$.  In general, a learning algorithm maps the observations of an agent from the previous rounds $\{0, 1, \hdots, k-1\}$ to its action in round $k$.  In this paper, we will only consider algorithms with a single-stage memory, where the action in round $k$ depends only on the observation in round $k-1$, to achieve the desired performance.
For potential games, one such learning algorithm that achieves almost sure convergence to a Nash equilibrium is the best-response (BR) (e.g., see \cite{Young04} and the references therein). While any global maximizer of the potential function is necessarily a Nash equilibrium, potential games may also have suboptimal Nash equilibria. For any potential game $\Gamma$ with the set of Nash equilibria $A^* \subseteq A$,  the comparison of the worst and the best Nash equilibria can be achieved through the measure known as the $\emph{price of anarchy}$ (PoA), i.e., $
\text{PoA}(\Gamma)= \max\limits_{a\in A^*}\phi(a)/\min\limits_{a\in A^*}\phi(a)$. For potential games with high PoA, noisy best-response algorithms such as \emph{log-linear learning} (LLL)  \cite{Blume93} can be used to have the agents spend most of their time at the global maximizers of $\phi(a)$. More specifically, LLL induces an irreducible and aperiodic Markov chain over the action space such that the limiting distribution, $\mu_\epsilon$, satisfies
\begin{equation}
\label{sstab}
\lim_ {\epsilon \rightarrow 0^+} \mu_{\epsilon}(a) >0 \Longleftrightarrow \phi(a) \geq \phi(a'), \forall a' \in A.
\end{equation}
{ Based on \eqref{sstab}, as the noise parameter of LLL, $\epsilon$, goes down to zero (as LLL becomes similar to BR), only the action profiles that globally maximize $\phi$ maintain a non-zero probability in the resulting limiting distribution $\mu_\epsilon$. }
\section{Game Design}
\label{design}
In this section, we map the DTE problem to a potential game, $\Gamma_{\text{DTE}}$, whose potential function is equal to \eqref{obj}. Once such a game is designed by defining the action sets and the utility functions, learning algorithms such as BR or LLL can be used to reach the desired joint plans in a distributed manner. 
\subsection{Action Space Design} 
The impact of each agent on the overall objective in \eqref{obj} is determined only by its trajectory. Accordingly, one possible way to design the action space is to define each action set as $A_i= \mathbf{P}_i$, which contains all the feasible trajectories. However, the number of feasible trajectories, $|\mathbf{P}_i|$, grows exponentially with the episode length $T$, and the learning process typically gets slower as the agents need to explore a larger number of possibilities. Furthermore, many standard learning algorithms such as BR and LLL require the updating agent in each round to compute all the possible utilities that can be obtained by switching to any of its feasible actions. Hence, a large action set increases not only the number of rounds needed for the convergence of learning but also the computation time required by the updating agents in each round. Motivated by the practical importance of computation times, we aim to design the smallest action sets that can still yield the optimal joint plan.

We design the minimal action sets by excluding a large number of feasible trajectories that can never be essential to the overall performance, regardless of the trajectories taken by the other robots. For example, if a trajectory $\mathbf{q}_i \in \mathbf{P}_i$ does not serve any task, i.e., there is no $\tau_j \in \tau$ such that $q_i^{t}=q_i^{t+1}=\ell_j$ for some $t \in \{t^a_j, \hdots, t^d_j-1\}$, then it is guaranteed that robot $i$ will not be contributing to the global score in \eqref{obj} when traversing $\mathbf{q}_i$ since it will not be contribution to any counter in \eqref{rtq}. Accordingly,  such $\mathbf{q}_i$ can be removed from the action set without causing any performance loss. Furthermore, if a trajectory $\mathbf{p}_i$ has all the task-serving stays contained in some other trajectory $\mathbf{q}_i$, then removing $\mathbf{q}_i$ from the action set (while keeping $\mathbf{p}_i$) would not degrade the overall performance. Accordingly, we define each action set as follows: 
\begin{align}\label{actset}
    A_i = \argmin_{\emptyset \subset A_i' \subseteq \mathbf{P}_i} & \quad |A_i'|\\
    s.t. & \quad  \eqref{actionsetc2}, \nonumber
\end{align}
where the constraint is
\begin{equation}
\label{actionsetc2}
\forall \mathbf{q}_i \in \mathbf{P}_i \setminus A_i', \exists \mathbf{p}_i \in A_i': p_i^{t}=p_i^{t+1}=q_i^{t}, \forall t \in t^*(\mathbf{q}_i,\tau),
\end{equation}
and $t^*(\mathbf{q}_i, \tau)$ is the set of times where $\mathbf{q}_i$ involves a stay at a task location within the corresponding time window, i.e.,  
\begin{equation}
\label{actwin}
t^*(\mathbf{q}_i, \tau)= \{t \mid \exists \tau_j \in \tau, \; q_i^{t}=q_i^{t+1}=\ell_j, \; t^a_j \leq t <t^d_j\}.
\end{equation}
Accordingly, each robot's action set $A_i$ is the smallest non-empty subset of its all feasible trajectories $\mathbf{P}_i$ such that for every excluded trajectory ${\mathbf{q}_i \in \mathbf{P}_i \setminus A_i}$, there exists a trajectory $\mathbf{p}_i \in A_i$ such that any stay in $\mathbf{q}_i$ within the corresponding active time window is also included in $\mathbf{p}_i$, i.e., \eqref{actionsetc2}. Note that any $\mathbf{q}_i$ with no task-serving stays, i.e., $t^*(\mathbf{q}_i,\tau)=\emptyset$, is trivially removed from the action set as \eqref{actionsetc2} does not impose any restriction on the removal of such $\mathbf{q}_i$. Our next result formally shows that this reduced action space does not cause any suboptimality.

\begin{lemma}
\label{maxpreserved}
For the sets of feasible trajectories $\mathbf{P}_i$ as in \eqref{traj-set} and the action sets $A_i$ as in \eqref{actset}, $\mathbf{P}=\mathbf{P}_1 \times \hdots \times \mathbf{P}_n$ and $A=A_1 \times \hdots \times A_n$ satisfy
\begin{equation}
\label{lem11}
\max_{\mathbf{p} \in \mathbf{P}}f(\mathbf{p}) = \max_{\mathbf{p} \in A} f(\mathbf{p}).
\end{equation}
\end{lemma}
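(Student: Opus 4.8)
The plan is to prove the two inequalities underlying the equality \eqref{lem11} separately. One direction is immediate: since $A_i \subseteq \mathbf{P}_i$ for every $i$ by construction \eqref{actset}, we have $A \subseteq \mathbf{P}$, hence $\max_{\mathbf{p}\in A} f(\mathbf{p}) \le \max_{\mathbf{p}\in\mathbf{P}} f(\mathbf{p})$. The content of the lemma is the reverse inequality. To establish it, I would fix an optimal joint plan $\mathbf{p}^\star \in \mathbf{P}$ with $f(\mathbf{p}^\star)=\max_{\mathbf{p}\in\mathbf{P}}f(\mathbf{p})$ and exhibit a joint plan $\mathbf{p}' \in A$ with $f(\mathbf{p}') \ge f(\mathbf{p}^\star)$; since $A\subseteq\mathbf{P}$, this forces $f(\mathbf{p}')=f(\mathbf{p}^\star)$ and therefore $\max_{\mathbf{p}\in A}f(\mathbf{p}) \ge \max_{\mathbf{p}\in\mathbf{P}}f(\mathbf{p})$.

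The plan $\mathbf{p}'$ is built componentwise. For each robot $r_i$: if $\mathbf{p}^\star_i \in A_i$, keep it, i.e.\ set $\mathbf{p}'_i = \mathbf{p}^\star_i$; otherwise $\mathbf{p}^\star_i \in \mathbf{P}_i\setminus A_i$, and applying the defining constraint \eqref{actionsetc2} with $\mathbf{q}_i=\mathbf{p}^\star_i$ yields a trajectory $\mathbf{p}'_i \in A_i$ satisfying $(p_i')^{t}=(p_i')^{t+1}=(p^\star_i)^{t}$ for every $t \in t^*(\mathbf{p}^\star_i,\tau)$; that is, $\mathbf{p}'_i$ reproduces every task-serving stay of $\mathbf{p}^\star_i$ that falls inside the corresponding time window. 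In both cases $\mathbf{p}'_i \in A_i$, so $\mathbf{p}'=(\mathbf{p}'_1,\ldots,\mathbf{p}'_n) \in A$.

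The key step is to show that $\mathbf{c}_j(\mathbf{p}') \ge \mathbf{c}_j(\mathbf{p}^\star)$ for every task $\tau_j\in\tau$. Fix $\tau_j$ and a time $t\in\{t^a_j,\ldots,t^d_j-1\}$. By \eqref{rtq} it suffices to check that every robot counted in $c_j(\mathbf{p}^\star,t)$ is also counted in $c_j(\mathbf{p}',t)$. So suppose $(p^\star_i)^{t}=(p^\star_i)^{t+1}=\ell_j$. Since this is a stay at a task location within that task's active window, the definition \eqref{actwin} gives $t\in t^*(\mathbf{p}^\star_i,\tau)$; then, by the construction of $\mathbf{p}'_i$ (trivially if $\mathbf{p}'_i=\mathbf{p}^\star_i$, and via \eqref{actionsetc2} otherwise), we get $(p_i')^{t}=(p_i')^{t+1}=\ell_j$, so $r_i$ is counted in $c_j(\mathbf{p}',t)$ as well. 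Hence $c_j(\mathbf{p}',t)\ge c_j(\mathbf{p}^\star,t)$ for all $t$ in the window, i.e.\ $\mathbf{c}_j(\mathbf{p}')\ge\mathbf{c}_j(\mathbf{p}^\star)$. Invoking the monotonicity assumption \eqref{moreok} task by task and summing as in \eqref{obj} yields $f(\mathbf{p}')=\sum_{\tau_j\in\tau}v_j(\mathbf{c}_j(\mathbf{p}'))\ge\sum_{\tau_j\in\tau}v_j(\mathbf{c}_j(\mathbf{p}^\star))=f(\mathbf{p}^\star)$, which completes the argument.

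I expect the only delicate point to be the bookkeeping in this last step: verifying that $t^*(\mathbf{q}_i,\tau)$ in \eqref{actwin} captures exactly the time indices at which a single robot's trajectory can influence any of the counters $c_j$ in \eqref{rtq}, so that preserving a robot's stays on $t^*(\mathbf{p}^\star_i,\tau)$ suffices to preserve all of its counter contributions (any extra stays outside that set are harmless by \eqref{moreok}, and the per-robot indicators in \eqref{rtq} add up, so simultaneous replacement across all robots causes no interaction). No deeper idea is needed; in particular, the proof uses only the inclusion $A_i\subseteq\mathbf{P}_i$ and the feasibility constraint \eqref{actionsetc2}, not the \emph{minimality} of $A_i$ in \eqref{actset} — minimality is what makes learning fast but is irrelevant to \eqref{lem11}.
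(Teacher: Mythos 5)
Your proposal is correct and follows essentially the same route as the paper's proof: establish $\max_{\mathbf{p}\in A}f(\mathbf{p})\le\max_{\mathbf{p}\in\mathbf{P}}f(\mathbf{p})$ from $A\subseteq\mathbf{P}$, then use the constraint \eqref{actionsetc2} to replace an optimal joint plan componentwise with one in $A$ whose counters dominate, and conclude via the monotonicity assumption \eqref{moreok}. Your explicit per-robot case split (keep $\mathbf{p}^\star_i$ if it already lies in $A_i$) is a slightly more careful rendering of the same argument, and your closing remark that only \eqref{actionsetc2}, not minimality, is needed is accurate.
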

\begin{proof}
Since $A \subseteq \mathbf{P}$, 
\begin{equation}
\label{lem11a}
\max_{\mathbf{p} \in \mathbf{P}}f(\mathbf{p}) \geq \max_{\mathbf{p} \in A} f(\mathbf{p}).
\end{equation}
Now, let $\mathbf{q} \in \mathbf{P} \setminus A$ be a maximizer of $f(\mathbf{p})$, i.e.,
\begin{equation}
\label{lem12}
     f(\mathbf{q}) = \max_{\mathbf{p} \in \mathbf{P}} f(\mathbf{p}).
\end{equation}
Due \eqref{actionsetc2}, there exist $\mathbf{p} \in A$ such that, for every robot $i$, all the stays in $\mathbf{q}_i$ that takes place at a task location  within the corresponding time windows are also included in $\mathbf{p}_i$. To be more specific, for any  $t \in\{0, \hdots, T-1\}$ and $\tau_j \in \tau$, we have
\begin{equation}
\label{lem13}
   q_i^t= q_i^{t+1}=\ell_j, \; t^a_j \leq t <t^d_j \Rightarrow p_i^t= p_i^{t+1}=q_i^t.
\end{equation}
Accordingly, any stay in $\mathbf{q}_i$ that may contribute to a counter (see \eqref{rtq} and \eqref{rtq2}) is also included in $\mathbf{p}_i$, which implies $\mathbf{c}_j(\mathbf{p}) \geq \mathbf{c}_j(\mathbf{q})$ for every task $\tau_j \in \tau$. Hence, due to \eqref{moreok} and \eqref{obj}, we have $f(\mathbf{p}) \geq f(\mathbf{q})$ and, due to \eqref{lem12},
\begin{equation}
\label{lem14}
\max_{\mathbf{p} \in A}f(\mathbf{p}) \geq \max_{\mathbf{p} \in \mathbf{P}} f(\mathbf{p}).
\end{equation}
Consequently, \eqref{lem11a} and \eqref{lem14} together imply \eqref{lem11}.
\qed \end{proof}


\subsection{Utility Design} We utilize the notion of \emph{wonderful life utility} \cite{Tumer04} to design a game whose potential function is the total value in \eqref{obj}. Accordingly, we define the utility of each robot as its marginal contribution to the total value, i.e.,
\begin{equation}
\label{util}
U_i (\mathbf{p}) =  \sum_{\tau_j \in \tau}\left [ v_j(\mathbf{c}_j(\mathbf{p})) - v_j(\mathbf{c}_j(\mathbf{p}_{-i})) \right ],
\end{equation}
where $\mathbf{c}_j(\mathbf{p}_{-i})$ is the counter associated with $\tau_j$ that disregards agent $i$, i.e.,
\begin{equation}
\label{rtqmarg}
c_j(\mathbf{p}_{-i},t) =|\{r_k \in R \setminus \{r_i\} \mid p_k^t = p_k^{t+1} = \ell_i\}|.
\end{equation}

{ As per \eqref{util}, the utility of each robot $r_i$, i.e., $U_i(\mathbf{p})$, is equal to the total value of tasks that are completed  under the trajectories $\mathbf{p}$ and would not be completed without $r_i$ (under the trajectories $\mathbf{p}_{-i}$). }
\begin{lemma}
\label{potgam} 
Utilities in \eqref{util} lead to a potential game $\Gamma_{\emph{DTE}}= (R, A, U)$ whose potential function equals the total value received from the tasks, i.e., ${\phi(\mathbf{p})= f(\mathbf{p})}$.
\end{lemma}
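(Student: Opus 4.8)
The plan is to verify the potential-game identity \eqref{PG} directly, taking $\phi(\mathbf{p}) = f(\mathbf{p})$ as the candidate potential function. The crucial structural observation is that the utility in \eqref{util} decomposes as
\[
U_i(\mathbf{p}) \;=\; f(\mathbf{p}) \;-\; g_i(\mathbf{p}_{-i}), \qquad g_i(\mathbf{p}_{-i}) := \sum_{\tau_j \in \tau} v_j(\mathbf{c}_j(\mathbf{p}_{-i})),
\]
which follows immediately by distributing the sum in \eqref{util} and invoking \eqref{obj} for the first term. The first step is then to argue that $g_i(\mathbf{p}_{-i})$ genuinely depends only on $\mathbf{p}_{-i}$ and not on robot $r_i$'s own trajectory: by \eqref{rtqmarg}, each counter entry $c_j(\mathbf{p}_{-i},t)$ tallies only robots in $R \setminus \{r_i\}$ that stay at $\ell_j$ from $t$ to $t+1$, so the vector $\mathbf{c}_j(\mathbf{p}_{-i})$, and hence $v_j(\mathbf{c}_j(\mathbf{p}_{-i}))$, is unchanged when $\mathbf{p}_i$ is replaced by any other trajectory of $r_i$.

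Given this, the second step is a one-line cancellation. For any robot $r_i$, any $\mathbf{p}_i, \mathbf{p}_i' \in A_i$, and any $\mathbf{p}_{-i} \in A_{-i}$,
\begin{align*}
U_i(\mathbf{p}_i', \mathbf{p}_{-i}) - U_i(\mathbf{p}_i, \mathbf{p}_{-i})
&= \big[f(\mathbf{p}_i', \mathbf{p}_{-i}) - g_i(\mathbf{p}_{-i})\big] - \big[f(\mathbf{p}_i, \mathbf{p}_{-i}) - g_i(\mathbf{p}_{-i})\big] \\
&= f(\mathbf{p}_i', \mathbf{p}_{-i}) - f(\mathbf{p}_i, \mathbf{p}_{-i}),
\end{align*}
since the $g_i(\mathbf{p}_{-i})$ terms are independent of the deviating agent's action and cancel. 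This is exactly \eqref{PG} with $\phi = f$, so $\Gamma_{\text{DTE}} = (R, A, U)$ is a potential game whose potential function is the total value $f(\mathbf{p})$ received from the tasks.

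There is no real obstacle here: the argument is the standard wonderful-life-utility / marginal-contribution construction of \cite{Tumer04}, and the only point requiring any care is confirming the $\mathbf{p}_i$-independence of the subtracted term $g_i(\mathbf{p}_{-i})$, after which the potential identity drops out by cancellation. (One minor bookkeeping remark to flag: as written, \eqref{rtqmarg} refers to $\ell_i$ where it should read $\ell_j$ to be consistent with \eqref{rtq}; this is a typo that does not affect the proof.)
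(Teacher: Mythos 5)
Your proof is correct and follows essentially the same route as the paper's: both verify the potential identity \eqref{PG} with $\phi=f$ by observing that the subtracted marginal term $\sum_{\tau_j}v_j(\mathbf{c}_j(\mathbf{p}_{-i}))$ is independent of $r_i$'s own trajectory and cancels in the utility difference; your version merely makes this cancellation explicit via the function $g_i$, which the paper leaves implicit. Your side remark about $\ell_i$ versus $\ell_j$ in \eqref{rtqmarg} is also a correct catch of a typo that does not affect the argument.
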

\begin{proof}
Let $\mathbf{p}_i \neq \mathbf{p}_i' \in A_i$ be two possible trajectories for any robot $i$, and let $\mathbf{p}_{-i}$ denote the trajectories of all other other robots. Using \eqref{util}, we have 
 \begin{equation}
\label{potgam3}
U_i(\mathbf{p}_i,\mathbf{p}_{-i})-U_i(\mathbf{p}_i',\mathbf{p}_{-i}) =  \sum_{\tau_j \in \tau}v_j(\mathbf{p}_i,\mathbf{p}_{-i}) - \sum_{\tau_j \in \tau} v_j(\mathbf{p}_i',\mathbf{p}_{-i}),
\end{equation}
which, together with \eqref{obj}, implies
\begin{equation}
\label{potgam4}
U_i(\mathbf{p}_i,\mathbf{p}_{-i})-U_i(\mathbf{p}_i',\mathbf{p}_{-i})= f(\mathbf{p}_i,\mathbf{p}_{-i})-f(\mathbf{p}_i',\mathbf{p}_{-i}).
\end{equation}
Consequently, $f(\mathbf{p})$ is the potential function for $\Gamma_{\emph{DTE}}$.
\end{proof}

 \emph{\textbf{Example 1}}: Consider the environment in Fig.~\ref{env} with 3 robots, two stationed at $s_1$ in cell $(2,2)$ and one stationed at $s_3$ in cell $(4,5)$. Let $T=6$, and consider a single task at location $(3,3)$ that can be completed at any time during the episode by moving some boxes as illustrated in Fig. \ref{ex-boxs}. More specifically, the task first requires moving a heavy box, which can be handled by at least 2 robots, and then moving the two light boxes, each of which can be handled by a single robot, to the heavy box's initial location. Suppose that moving the heavy box to its desired position takes one time step. Similarly, a single robot can move one light box to the desired location in one time step. 
 
 \begin{figure}[htb!]
\centering
\includegraphics[trim =0mm 0mm 0mm 0mm, clip,scale=0.6]{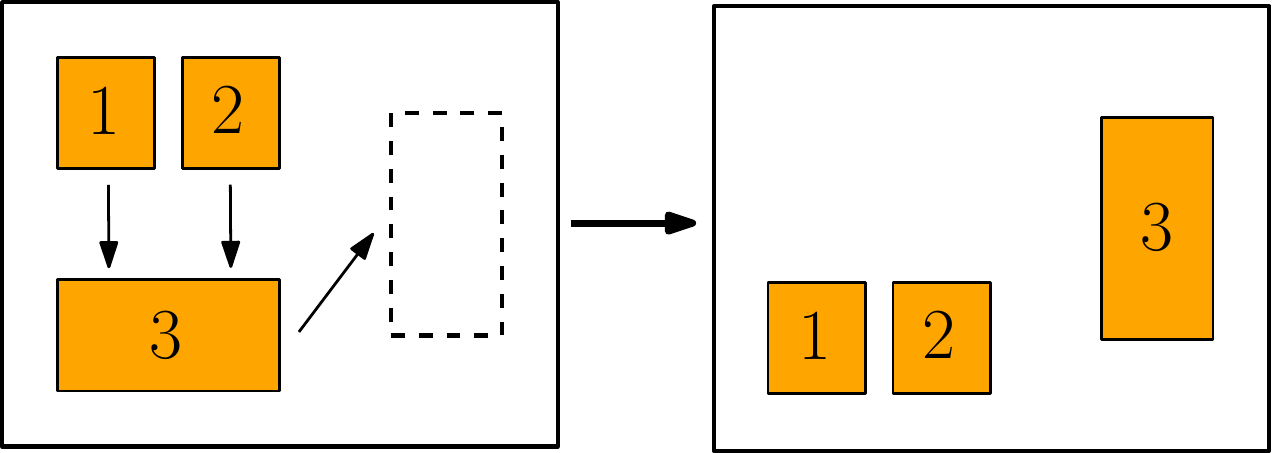}
\caption{An example task where the goal is to move there boxes from the configuration on the left into the configuration on the right. The task can be completed by first moving Box 3 to its desired position (dashed) and then moving Boxes 1 and 2 to Box 3's initial position. }
\label{ex-boxs}
\end{figure}
 
 Such a task first requires  at least 2 robots to serve this location together for one time step, and then the total number of robots serving that location within the remaining time to be at least 2.  Suppose that the task yields a value of 1 if successfully completed. Accordingly, this task can be represented with the following specifications: $\ell_1=(3,3)$, $t^a_1=0$, $t^d_1=6$,
 and
\begin{equation}
    \label{vex}
    v_1(\mathbf{c}_1(\mathbf{p}))=\left\{\begin{array}{ll} 1, \mbox{ if $\exists i, \; [\mathbf{c}_1(\mathbf{p})]_i  \geq 2$, $\sum \limits_{j>i}[\mathbf{c}_1(\mathbf{p})]_j\geq 2$ } \\ 0,\mbox{ otherwise.}\end{array} \right.,
\end{equation}
where $[\mathbf{c}_1(\mathbf{p})]_i$ and $[\mathbf{c}_1(\mathbf{p})]_j$  denote the $i^{th}$ and $j^{th}$ entries of the counter vector $\mathbf{c}_1(\mathbf{p})$. The value function in \eqref{vex} implies that the task is completed if there exists an index $i$ such that 1) the $i^{th}$ entry of $\mathbf{c}_1(\mathbf{p})$ is at least two, and 2) the summation of the entries with indices $j>i$ are at least two. Given these task specifications, let the trajectories of the three robots over the episode of six time steps, i.e., for $t=0,1, \hdots, 6$, be as follows:  
$$\mathbf{p}_1=\{(2,2), (3,3), (3,3), (3,3), (3,3), (3,3), (2,2)\},$$ $$\mathbf{p}_2=\{(2,2), (3,3), (3,3), (3,3), (3,3), (3,3), (2,2)\},$$ $$\mathbf{p}_3=\{(4,5), (3,4), (3,3), (3,3), (3,3), (3,4), (4,5)\}.$$ In that case, the task is completed since $\mathbf{c}_1(\mathbf{p})= [0, 2, 3, 3, 2, 0]^{\text{T}}$ as per \eqref{rtq2}. The task can be completed without $r_1$ or $r_2$ since
$${\mathbf{c}_1(\mathbf{p}_{-1})=\mathbf{c}_1(\mathbf{p}_{-2})= [0, 1, 2, 2, 1, 0]^{\text{T}}}.$$ 
Accordingly, $r_1$ and $r_2$ receive the utilities $U_1(\mathbf{p})=U_2(\mathbf{p})=0$. Similarly, if $r_3$ is removed from the system, $r_1$ and $r_2$ can still complete the task since ${\mathbf{c}_1(\mathbf{p}_{-3})= [0, 2, 2, 2,2,0]^{\text{T}}}$. Hence, $r_3$ also receives a utility of zero, $U_3(\mathbf{p})=0$. As such, although the task is completed in this example, none of the robots would receive a utility since their marginal contributions to the value received are all equal to zero, i.e., the task would still be completed by the remaining two robots if any single robot was removed from the system. 
 
\subsection{Communication and Information Requirements}
Both the action set in \eqref{actset} and the utility in \eqref{util} can be computed by each robot based on local information. To be more specific, for each robot $r_i$ we first define two sets: 1) set of reachable tasks (robot can serve the location of the task for at least one time step and return to its station within $T$ steps), 
\begin{equation}
\label{loc_task}
\tau^i_{local}=\{\tau_j \in \tau \mid dist(\ell_j,\sigma_i) < T/2\},
\end{equation}
where $dist(\ell_j,\sigma_i)$ denotes the minimum number of transitions (allowing diagonal transitions) needed to reach the task location $\ell_j$ from the station $\sigma_i$, and 2) set of robots $r_j$ who has a common reachable task with $r_i$, i.e.,
\begin{equation}
\label{loc_rob}
R^i_{local}=\{r_j \in R \mid \tau^i_{local} \cap \tau^j_{local} \neq \emptyset\}.
\end{equation}

Accordingly, if each robot $r_i$ has the following information:  1) the specifications of each task $\tau_j \in \tau^i_{local}$, and 2) the trajectory $\mathbf{p}_j$ of each robot $r_j \in R^i_{local}$, then each robot $r_i$ can compute its own utility in \eqref{util}. Furthermore, for any trajectory $\mathbf{q}_i$  $t^*(\mathbf{q}_i,\tau)=t^*(\mathbf{q}_i,\tau^i_{local})$ since any task $\tau_j \notin \tau^i_{local}$ can never be served under any feasible trajectory of $r_i$. Hence, such local information is also sufficient for each robot to compute its action set as per \eqref{actset}. We assume that each robot $r_i$ is able to obtain the specifications of tasks in $\tau^i_{local}$ and the trajectories of robots in $R^i_{local}$ through local communications.

{In Section \ref{learn}, we present a learning process where the robots repetitively play $\Gamma_{DTE}$ and, at each round, a randomly picked robot updates its trajectory based on the utilities it can obtain from different trajectories. Accordingly, first, all the robots are given the specifications of their reachable tasks and they broadcast their initial trajectories to their neighbors in the beginning of the learning process. Then, at each round, only the updating agent needs to broadcast its new trajectory. In such a learning process over $K$ rounds, each robot $r_i$ would need to communicate, by either broadcasting its updated trajectory or receiving an update from another robot in $R^i_{local}$, in approximately $K|R^i_{local}|/n$ of the rounds, which defines the approximate communication load of the learning process on each robot. }
{
\subsection{Tasks with Identical Locations and Overlapping Time Windows}
\label{mtaskext}
Our derivations so far were based on the assumption that tasks arriving at the same location do not have overlapping time windows. This assumption was made just to simplify the notation and define the action of each robot as its trajectory. In this subsection, we show how this assumption can be easily lifted to use our proposed approach when multiple tasks may be active at the same location. In particular, this extension is achieved with minor modifications to the action sets $A_i$ in \eqref{actset} and the counters $\mathbf{c}_i$ in \eqref{rtq2}. We denote these modified versions as $A_i^+$ and $\mathbf{c}_i^+$, which are defined below.

Once each $A_i$ is generated according to \eqref{actset},  $A_i^+$ can be obtained from $A_i$ by adding a second dimension to the actions. This second dimension is used to distinguish between the cases where different tasks are served under the same trajectory. Given a set of $m$ tasks $\tau$,   each $\mathbf{p}_i^+=(\mathbf{p}_i,\mathbf{z}_i) \in A_i^+$ consists of a trajectory $\mathbf{p}_i \in A_i$ and an additional sequence $\mathbf{z}_i=\{z_i^0, z_i^1, \hdots, z_i^{T}\} \in \{0,\tau_1, \hdots, \tau_m\}^{T+1}$ that indicates which task is being served by $r_i$ at time $t$ (e.g., $z_i^t=0$ if no task is served by $r_i$ at time $t$). More specifically, let $\theta (a_i,\tau,t)$ denote the tasks in $\tau$ that can be served at time $t$ by a robot following the trajectory $a_i$, i.e., the set of tasks $\tau_j$ such that $r_i$ stays at the location of the task $\ell_j$ at time $t$ and the task is active at time $t$ ($t^a_j \leq t < t^d_j$):
\begin{equation}
    \label{thetaset}
    \theta (\mathbf{p}_i,\tau,t)=  \{\tau_j \in \tau \mid p_i^t=p_i^{t+1}=\ell_j, t^a_j \leq t <t^d_j \}.
\end{equation}
Accordingly, the action set $A_i^+$ is defined as
\begin{equation}
\label{aplus}
A_i^+=\{(\mathbf{p}_i,\mathbf{z}_i) \mid \mathbf{p}_i \in A_i, z_i^t = 0  \mbox{ if } \theta (\mathbf{p}_i,\tau,t) = \emptyset, z_i^t \in \theta (\mathbf{p}_i,\tau,t) \mbox{ if } \theta(\mathbf{p}_i,\tau,t) \neq \emptyset \},
\end{equation}
where each action consists of a trajectory $\mathbf{p}_i$ and a sequence $\mathbf{z}_i$ that explicitly states the tasks $r_i$ plans to serve while taking the trajectory $\mathbf{p}_i$. 
As such, $|A_i^+|\geq |A_i|$ and $A_i^+$ is obtained by minimally increasing the size of the action set so that each action uniquely identifies the service provided by each robot. Furthermore, $|A_i|=|A_i^+|$ when $\tau$ contains no tasks with identical locations and overlapping time windows. 

\emph{\textbf{Example 2}}: Consider the environment in Fig.~\ref{env} with a single robot $r_1$ stationed at $s_1$ in cell $(2,2)$. Let $T=4$, and consider two tasks $\tau=\{\tau_1,\tau_2\}$ such that the locations are $\ell_1=\ell_2=(3,3)$ and the arrival and departure times are $t^a_1=0$, $t^d_1=3$, $t^a_2=2$, $t^d_2=4$.  In this example, \eqref{actset} results in $A_1$ consisting of a single trajectory: $\mathbf{p}_1=\{(2,2), (3,3), (3,3), (3,3), (2,2)\}$. Along this trajectory, it is clear that the robot is serving $\tau_1$ during its stay at $t=1$ since $\tau_2$ has not arrived yet. However, this trajectory does not uniquely describe which task is served during the stay at $t=2$ as both tasks are active at that time. By growing the action set as in \eqref{aplus}, we obtain an action set containing two actions:
$
A_1^+=\{(\mathbf{p}_1,\{0,\tau_1, \tau_1, 0\}),(\mathbf{p}_1,\{0,\tau_1, \tau_2, 0\}) \}.
$
Note that any choice from $A_1^+$ uniquely determines the service provided by the robot.

In addition to extending the action sets as $A_i^+$, we also need to make a minor modification to the definition of the counters associated with the tasks. More specifically, we replace \eqref{rtq} and \eqref{rtq2} with 
\begin{equation}
\label{rtqplus}
c_i^+(\mathbf{p}^+,t) =\big|\{r_j \in R \mid p_j^t = p_j^{t+1} = \ell_i, z_j^t=\tau_i \} \big|,
\end{equation}
\begin{equation}
\label{rtq2plus}
\mathbf{c}_i^+(\mathbf{p}^+)=[c_i^+(\mathbf{p}^+,t^a_i), \hdots, c_i^+(\mathbf{p}^+,t^d_i-1)]^\text{T},
\end{equation}
where $\mathbf{p}^+ = [\mathbf{p}^+_1, \hdots, \mathbf{p}^+_n] \in A_1^+ \times \hdots \times A_n^+$ is the action profile in the modified action space. Accordingly, each robot $r_j$ contributes to the counter of a task $\tau_i$ if it stays at the corresponding location during the corresponding time window and commits to serving $\tau_i$ as per $\mathbf{p}^+_i=(\mathbf{p}_i,\mathbf{z}_i)$. By using these modifications to the action sets and the counters and defining the utilities accordingly as per \eqref{util}, i.e.,
\begin{equation}
\label{utilplus}
U_i (\mathbf{p}^+) =  \sum_{\tau_j \in \tau}\left [ v_j(\mathbf{c}_j^+(\mathbf{p}^+)) - v_j(\mathbf{c}_j^+(\mathbf{p}^+_{-i})) \right ],
\end{equation}
we complete the design of the $\Gamma_{DTE}$ for the cases where multiple tasks may be simultaneously active at the same location. In the remainder of the paper, we will continue discussing our derivations in the setting where the action of each robot is defined as its trajectory (tasks with identical locations do not have overlapping time windows). However, all of our results can be easily extended to the generalized case by using the modified game design presented here.

}
\section{Learning Dynamics}
\label{learn}
Once the specifications of tasks in the upcoming episode are provided to the robots and each robot computes its action set as in \eqref{actset}, various learning algorithms can be used by the robots in a repetitive play of $\Gamma_{\text{DTE}}$ to optimize \eqref{obj} in a distributed manner. We consider two conventional learning algorithms with different performance guarantees for potential games: Best-Response (BR), which ensures convergence to a Nash equilibrium, and Log-Linear Learning (LLL), which ensures the stochastic stability of joint plans that maximize the potential function (e.g., see \cite{Blume93,Young04} and the references therein). In this setting, the learning algorithm serves as a distributed optimization protocol where each robot $r_i$ updates its intended plan based on the specifications of the tasks in $\tau^i_{local}$, which is defined in \eqref{loc_task}, and the plans of the other robots in $R^i_{local}$, which is defined in \eqref{loc_rob}.  Under these algorithms, a random agent is selected to make a unilateral update in each round, and that agent plays a best response or a noisy best response (log-linear) to the recent actions of the other agents. The selection of a random agent at each round can be achieved in a distributed manner without a global coordination, for instance by using the asynchronous time model in \cite{Boyd06}. In the best-response algorithm, the updating agent picks a maximizer of its utility function (assuming the actions of others will stay the same)  as its next actions (picks the current action if it is already a maximizer). In the log-linear learning, the updating agent randomizes the next action over the whole action set with probabilities determined by the corresponding utilities (similar to the softmax function). Accordingly, the agent assigns much higher probabilities to the actions that would yield higher utility. Both algorithms are formally described below.

{\small
\begin{center}
\begin{tabular}{l l}
\rule[0.08cm]{8.5cm}{0.03cm}\\
\textbf{Best Response (BR)}\\
\rule[0.08cm]{8.5cm}{0.02cm}\\
\mbox{\small $\;1:\;$}\textbf{initialization:} $k=0$, arbitrary $\mathbf{p}(0)\in A$.\\
 \mbox{\small $\;2:\;$}\textbf{repeat} \\
\mbox{\small $\;3:\;$}\hspace{0.1cm} Pick a random agent $r_i \in R$.\\
\mbox{\small $\;4:\;$}\hspace{0.1cm} Compute $\text{BR}(\mathbf{p}_{-i}(k))= \argmax \limits_{\mathbf{p}_i \in A_i} U_i(\mathbf{p}_i, \mathbf{p}_{-i}(k))$. \\
\mbox{\small $\;5:\;$}\hspace{0.1cm} $\mathbf{p}_i(k+1)=\left\{\begin{array}{ll} \mbox{$\mathbf{p}_i(k)$, if $\mathbf{p}_i(k) \in \text{BR}(\mathbf{p}_{-i}(k))$,} \\ \mbox{Random in $\text{BR}(\mathbf{p}_{-i}(k))$, otherwise.}\end{array}\right.$
\\
\mbox{\small $\;6:\;$}\hspace{0.1cm} $\mathbf{p}_{-i}(k+1)=\mathbf{p}_{-i}(k)$.
\\
\mbox{\small $\;7:\;$}\hspace{0.1cm} $k=k+1.$\\
\mbox{\small $\;8:\;$} \textbf{end repeat}\\

\rule[0.08cm]{8.5cm}{0.02cm}\\
\end{tabular}
\end{center}
}

{\small
\begin{center}
\begin{tabular}{l l}
\rule[0.08cm]{8.5cm}{0.03cm}\\
\textbf{Log-Linear Learning (LLL)}\\
\rule[0.08cm]{8.5cm}{0.02cm}\\
\mbox{\small $\;1:\;$}\textbf{initialization:} $k=0$, arbitrary $\mathbf{p}(0)\in A$, small $\epsilon >0$.\\
 \mbox{\small $\;2:\;$}\textbf{repeat} \\
\mbox{\small $\;3:\;$}\hspace{0.1cm} Pick a random agent $r_i \in R$.\\
\mbox{\small $\;4:\;$}\hspace{0.1cm} Randomize the next action of $r_i$:  \\
\hspace{0.65cm} $\Pr[\mathbf{p}_i(k+1)=\mathbf{p}_i] \sim \exp{\left(\dfrac{U_i(\mathbf{p}_i, \mathbf{p}_{-i}(k))}{\epsilon}\right)}$, $\forall  \mathbf{p}_i \in A_i$. \\
\mbox{\small $\;5:\;$}\hspace{0.1cm} $\mathbf{p}_{-i}(k+1)=\mathbf{p}_{-i}(k)$.\\
\mbox{\small $\;6:\;$}\hspace{0.1cm} $k=k+1.$\\
\mbox{\small $\;7:\;$} \textbf{end repeat}\\

\rule[0.08cm]{8.5cm}{0.02cm}\\
\end{tabular}
\end{center}
}

 Our next results provide the formal guarantees on the evolution of the global score in \eqref{obj} when robots follow BR or LLL in a repeated play of $\Gamma_{\textnormal{DTE}}$.  { In particular, we first show that if all robots follow BR, then the value of completed tasks converges to a value within $1/PoA(\Gamma_{\textnormal{DTE}})$ of the maximum possible value with probability one as the number of rounds, $k$, goes to infinity.}
\begin{theorem}
\label{Solved}
Let $\Gamma_{\textnormal{DTE}}=(R,A,U)$ be designed as per \eqref{actset} and \eqref{util}. If all robots follow BR in a repeated play of $\Gamma_{\textnormal{DTE}}$, then with probability one
\begin{equation}
\label{thmeq-br}
\lim_{k \to \infty} f(\mathbf{p}(k)) \geq \dfrac{\max\limits_{\mathbf{q}\in \mathbf{P}} f(\mathbf{q})}{ \textnormal{PoA}(\Gamma_{\textnormal{DTE}}) }.
\end{equation}
\end{theorem}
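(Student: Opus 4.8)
The plan is to combine two facts: the almost-sure convergence of BR to a Nash equilibrium of $\Gamma_{\textnormal{DTE}}$, which rests on the potential game structure from Lemma~\ref{potgam}, and a rewriting of the PoA definition that identifies the \emph{worst} Nash equilibrium value with $\max_{\mathbf{q}\in\mathbf{P}}f(\mathbf{q})/\textnormal{PoA}(\Gamma_{\textnormal{DTE}})$.

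First I would establish convergence. By Lemma~\ref{potgam}, $\Gamma_{\textnormal{DTE}}$ is a potential game with potential $\phi=f$, and each $A_i$ is finite, so $f$ attains only finitely many values on $A$. In BR, a selected agent changes its action only if this strictly increases its utility, which by \eqref{PG} strictly increases $f$; since $f$ is bounded above by $\max_{\mathbf{p}\in A}f(\mathbf{p})$ and every such increase is at least the smallest gap between distinct values of $f$, along any sample path only finitely many action changes can occur. Because a fresh agent is drawn with uniformly positive probability at each round, almost surely every agent is selected infinitely often; hence if the running profile were not a Nash equilibrium, some agent able to strictly improve would eventually be selected and trigger another change. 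Together these imply that, almost surely, after finitely many rounds the process reaches a profile $\mathbf{p}^\ast\in A^\ast$ from which no agent can unilaterally improve, and the tie-breaking rule of line~5 then keeps it fixed. Thus $f(\mathbf{p}(k))=f(\mathbf{p}^\ast)$ for all large $k$, so $\lim_{k\to\infty}f(\mathbf{p}(k))$ exists and equals $f(\mathbf{p}^\ast)\ge \min_{a\in A^\ast}f(a)$ almost surely.

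Next I would connect $\min_{a\in A^\ast}f(a)$ to the PoA. In any potential game every maximizer of $\phi$ over the action space is a Nash equilibrium (immediate from \eqref{PG}), so $\max_{a\in A^\ast}\phi(a)=\max_{\mathbf{p}\in A}f(\mathbf{p})$, which by Lemma~\ref{maxpreserved} equals $\max_{\mathbf{q}\in\mathbf{P}}f(\mathbf{q})$. Substituting into $\textnormal{PoA}(\Gamma_{\textnormal{DTE}})=\max_{a\in A^\ast}\phi(a)/\min_{a\in A^\ast}\phi(a)$ yields $\min_{a\in A^\ast}f(a)=\max_{\mathbf{q}\in\mathbf{P}}f(\mathbf{q})/\textnormal{PoA}(\Gamma_{\textnormal{DTE}})$, and chaining this with the inequality from the previous paragraph gives \eqref{thmeq-br}.

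The only delicate step is the convergence claim, which needs both a deterministic finiteness argument (no infinite improvement sequence, via the bounded, finite-valued potential) and a probabilistic non-stalling argument (each agent updating infinitely often). This is the standard finite-improvement-property / weak-acyclicity result for BR in potential games cited just before the theorem; I would either invoke it directly or reproduce the short argument above, being careful to match the asynchronous agent-selection model and the line~5 tie-breaking rule to its hypotheses. A minor point to note in passing is the degenerate case $\min_{a\in A^\ast}f(a)=0$, where \eqref{thmeq-br} holds trivially with the convention $\textnormal{PoA}(\Gamma_{\textnormal{DTE}})=\infty$.
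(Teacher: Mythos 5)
Your proposal is correct and follows essentially the same route as the paper: almost-sure convergence of BR to a Nash equilibrium of the potential game, the PoA definition to bound the worst-equilibrium value, and Lemma~\ref{maxpreserved} to pass from $\max_{\mathbf{q}\in A}f$ to $\max_{\mathbf{q}\in\mathbf{P}}f$. The only difference is one of detail: you reproduce the standard finite-improvement/infinitely-often-selected argument that the paper simply cites, and you make explicit the step (left implicit in the paper) that the best Nash equilibrium attains $\max_{\mathbf{q}\in A}f$ because every potential maximizer is an equilibrium.
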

\begin{proof}
Since $\Gamma_{\text{DTE}}=(R,A,U)$ is a potential game, the best response dynamics achieve convergence to a Nash equilibrium with probability one (e.g., see \cite{Young04} and the references therein). From the definition of PoA, this implies that with probability one
\begin{equation}
\label{thmeq-br1}
\lim_{k \to \infty} f(\mathbf{p}(k)) \geq \dfrac{\max\limits_{\mathbf{q}\in A} f(\mathbf{q})}{ \text{PoA}(\Gamma_{\text{DTE}}) }.
\end{equation}
Using \eqref{thmeq-br1} together with \eqref{lem11}, we obtain \eqref{thmeq-br}.
\end{proof}

{ Our next result shows that if all robots follow LLL with an arbitrarily small noise parameter $\epsilon>0$, then the probability of obtaining trajectories that maximize the total value becomes arbitrarily close to one as the number of rounds, $k$, goes to infinity. }
\begin{theorem}
\label{Solved2}
Let $\Gamma_{\textnormal{DTE}}=(R,A,U)$ be designed as per \eqref{actset} and \eqref{util}. If all robots follow log-linear learning (LLL) in a repeated play of $\Gamma_{\textnormal{DTE}}$, then 
\begin{equation}
\label{thmeq}
\lim_{\epsilon \to 0^+}\lim_{k \to \infty} \Pr \left [f(\mathbf{p}(k)) = \max_{\mathbf{q}\in \mathbf{P}} f(\mathbf{q}) \right]=1.
\end{equation}

\end{theorem}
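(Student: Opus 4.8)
The plan is to deduce the claim from the stochastic stability property of log-linear learning stated in \eqref{sstab}, together with Lemma \ref{potgam} (potential-game structure) and Lemma \ref{maxpreserved} (invariance of the optimum under the reduced action space). First I would record the structural facts. The environment $P$ is a finite grid and each episode has finitely many time steps $T$, so every set of feasible trajectories $\mathbf{P}_i$ in \eqref{traj-set} is finite, hence so is every action set $A_i \subseteq \mathbf{P}_i$ and the joint action space $A$. By Lemma \ref{potgam}, $\Gamma_{\textnormal{DTE}}=(R,A,U)$ is a finite potential game with potential function $\phi = f$. Since under LLL the selected agent at each round places strictly positive probability on every action in its (finite) action set, the induced process $\{\mathbf{p}(k)\}_{k\geq 0}$ is an irreducible and aperiodic Markov chain on $A$; it therefore has a unique stationary distribution $\mu_\epsilon$, and for every $a\in A$ we have $\lim_{k\to\infty}\Pr[\mathbf{p}(k)=a]=\mu_\epsilon(a)$, irrespective of the arbitrary initial profile $\mathbf{p}(0)$.

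Next I would pass from profiles to values. Let $M_A=\max_{\mathbf{q}\in A}f(\mathbf{q})$ and let $A^\star=\{a\in A: f(a)=M_A\}$, which is nonempty because $A$ is finite. Summing the previous limit over $a\in A^\star$ gives
$$\lim_{k\to\infty}\Pr\big[f(\mathbf{p}(k))=M_A\big]=\sum_{a\in A^\star}\mu_\epsilon(a)=\mu_\epsilon(A^\star).$$
Now take $\epsilon\to 0^+$. Since $A^\star$ is precisely the set of maximizers of the potential $\phi = f$ of $\Gamma_{\textnormal{DTE}}$, property \eqref{sstab} yields $\lim_{\epsilon\to 0^+}\mu_\epsilon(a)>0$ for $a\in A^\star$ and $\lim_{\epsilon\to 0^+}\mu_\epsilon(a)=0$ for $a\in A\setminus A^\star$ (the latter from the contrapositive of \eqref{sstab} together with $\mu_\epsilon(a)\geq 0$). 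As $A$ is finite and each $\mu_\epsilon$ is a probability distribution, $\mu_\epsilon(A\setminus A^\star)\to 0$, hence $\mu_\epsilon(A^\star)=1-\mu_\epsilon(A\setminus A^\star)\to 1$. Combining with the display above gives $\lim_{\epsilon\to 0^+}\lim_{k\to\infty}\Pr[f(\mathbf{p}(k))=M_A]=1$. Finally, Lemma \ref{maxpreserved} states $M_A=\max_{\mathbf{q}\in A}f(\mathbf{q})=\max_{\mathbf{q}\in\mathbf{P}}f(\mathbf{q})$, so the event $\{f(\mathbf{p}(k))=M_A\}$ coincides with $\{f(\mathbf{p}(k))=\max_{\mathbf{q}\in\mathbf{P}}f(\mathbf{q})\}$, which is exactly \eqref{thmeq}.

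I do not expect a deep obstacle here: the argument is essentially an assembly of Lemma \ref{potgam}, the standard convergence of a finite irreducible aperiodic Markov chain to its stationary distribution, the cited concentration result \eqref{sstab}, and Lemma \ref{maxpreserved}. The points that require care are (i) checking that the hypotheses behind \eqref{sstab} hold, i.e., that $\Gamma_{\textnormal{DTE}}$ is a \emph{finite} potential game and that LLL as specified induces an irreducible, aperiodic chain — both immediate from finiteness of $P$ and $T$ and from the full-support randomization in step $4$ of LLL; (ii) the order of limits in the statement, which is harmless since the inner limit is over $k$ for a \emph{fixed} $\epsilon$, so no uniformity/interchange argument is needed; and (iii) the finiteness bookkeeping that turns the vanishing of $\mu_\epsilon$ on non-optimal profiles into $\mu_\epsilon(A^\star)\to 1$. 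A minor technical note worth making explicit in the write-up is that one should move the optimum from $A$ to $\mathbf{P}$ only at the end, via Lemma \ref{maxpreserved}, since LLL runs over $A$ and its stationary distribution concentrates on the potential maximizers within $A$.
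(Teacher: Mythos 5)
Your proposal is correct and follows essentially the same route as the paper's proof: invoke the potential-game structure from Lemma \ref{potgam}, apply the stochastic-stability property \eqref{sstab} of LLL to conclude that the limiting distribution concentrates on the maximizers of $f$ over $A$, and then transfer the maximum from $A$ to $\mathbf{P}$ via Lemma \ref{maxpreserved}. You simply make explicit some bookkeeping (finiteness of $A$, summing $\mu_\epsilon$ over the maximizer set, and the vanishing of mass on non-maximizers) that the paper leaves implicit by citing the standard result.
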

\begin{proof}
Since $\Gamma_{\text{DTE}}=(R,A,U)$ is a potential game with the potential function $f(\mathbf{p})$, LLL induces an irreducible and aperiodic Markov chain with the limiting distribution $\mu_\epsilon$ over $A$ (e.g., see \cite{Young04} and the references therein) such that { as $\epsilon$ (the noise parameter of LLL) goes down to zero, only the global maximizers of $f(\mathbf{p})$ maintain a non-zero probability in $\mu_\epsilon$, i.e.,}
\begin{equation}
\label{th1}
\lim_ {\epsilon \rightarrow 0^+} \mu_{\epsilon}(\mathbf{q}) >0 \Longleftrightarrow f(\mathbf{q}) = \max_{\mathbf{q}' \in A} f(\mathbf{q}').
\end{equation}
Accordingly, the trajectories at the $k^{th}$ round of learning, $\mathbf{p}(k)$, satisfy 
\begin{equation}
\label{th2}
\lim_{\epsilon \to 0^+}\lim_{k \to \infty} \Pr \left [\mathbf{p}(k) =\mathbf{q}\right]>0 \Longleftrightarrow f(\mathbf{q}) = \max_{\mathbf{q}' \in A} f(\mathbf{q}'),
\end{equation}
which implies
\begin{equation}\label{th3}
\lim_{\epsilon \to 0^+}\lim_{k \to \infty} \Pr \left [f(\mathbf{p}(k)) = \max_{\mathbf{q}\in A} f(\mathbf{q}) \right]=1.
\end{equation}
Using \eqref{th3} together with \eqref{lem11}, we obtain \eqref{thmeq}.

\end{proof}

{ Based on Theorems \ref{Solved} and \ref{Solved2}, both BR and LLL provide guarantees on the trajectories $\mathbf{p}(k)$ as the number of rounds, $k$, goes to infinity. In practice, there would be a finite amount of time for planning the trajectories before each episode in the DTE problem. Accordingly, our proposed solution is to have the robots update their trajectories via learning in $\Gamma_{\textnormal{DTE}}$ over a finite number of rounds (available time between episodes) and then dispatch according to the resulting trajectories. When the learning horizon is sufficiently long, the performance induced by BR or LLL would be close to the respective limiting behavior. More specifically, for sufficiently large $k$: 1) $f(\mathbf{p}(k))$ is within $1/PoA(\Gamma_{\textnormal{DTE}})$ of the maximum possible value with a high probability under BR,  and 2) $f(\mathbf{p}(k))$ equals the maximum possible value with a high probability when LLL is executed with a sufficiently small noise parameter $\epsilon$.}

In light of Theorems \ref{Solved} and \ref{Solved2}, a major consideration in choosing the learning algorithm is the price of anarchy (PoA). If all Nash equilibria yield reasonably good $f(\mathbf{p})$, then best-response type algorithms can achieve the desired performance. Such an approach has the benefit of having a monotonic increase in the global objective in \eqref{obj} as the robots update their plans, i.e., $f(\mathbf{p}(k+1)) \geq f(\mathbf{p}(k))$ for all $k \geq 0$. On the other hand, if some Nash equilibria are highly suboptimal, noisy best-response type algorithms such as LLL can be used to ensure that the learning process does not converge to an undesirable Nash equilibrium and, while $f(\mathbf{p}(k))$ does not increase monotonically under the resulting learning process, a global optima of $f(\mathbf{p})$ is observed most of the time in the long-run as robots keep updating their plans. We continue our analysis by investigating $\textnormal{PoA}(\Gamma_{\textnormal{DTE}})$.

\subsection{Price of Anarchy}
\label{secpoa}

 We first provide an example to show that $\textnormal{PoA}(\Gamma_{\textnormal{DTE}})$ can be arbitrarily large in general  when there is not a special structure in the task specifications.

 \emph{\textbf{Example 3}}: Consider the environment in Fig \ref{env}, and let each episode consist of three time steps ($T=3$). Let there be two robots $\{r_1, r_2\}$, both stationed at $s_1$ in cell $(2,2)$. Suppose that we have three tasks with identical time windows, $t^a_1=t^a_2=t^a_3=0$ and $t^d_1=t^d_2=t^d_3=4$, and different locations: $\ell_1=(1,1)$,  $\ell_2=(1,2)$, $\ell_3=(1,3)$. Each task requires the handling of some boxes and can be completed if sufficiently many robots (depending on the weight of boxes) stay at that location for one time step, i.e., the value functions have the form 
 \begin{equation}
    \label{vex2}
    v_i(\mathbf{c}_i(\mathbf{p}))=\left\{\begin{array}{ll} \bar{v}_i, \mbox{ if $\max (\mathbf{c}_i(\mathbf{p})) \geq c^*_i$,} \\ 0,\mbox{ otherwise.}\end{array}\right.
\end{equation}
Suppose that $c^*_1=c^*_2=1$ (light boxes), $c^*_3=2$ (heavy boxes), and $\bar{v}_3 \gg \bar{v}_1,\bar{v_2}$. In this setting, using \eqref{actset}, the action set of each robot consists of three trajectories: going to any of the three task locations, staying there for one step and coming back. It can be shown that this scenario has three Nash equilibria with the following outcomes: 1) $r_1$ completes $\tau_1$ and $r_2$ completes $\tau_2$, 2) $r_1$ completes $\tau_2$ and $r_2$ completes $\tau_1$, and 3) $r_1$ and $r_2$ together complete $\tau_3$. While the first two cases result in a total value of $\bar{v}_1+\bar{v}_2$, the last option yields a total value of $\bar{v}_3$. Accordingly, $
\text{PoA}(\Gamma_{\text{DTE}})$ equals $\bar{v}_3/(\bar{v}_1+\bar{v}_2)$,
which can be arbitrarily large.

Example 3 shows that in general $\text{PoA}(\Gamma_{\text{DTE}})$ may be arbitrarily large. However, there are also instances of the problem where  $\text{PoA}(\Gamma_{\text{DTE}})$ is small. We will first give a definition and then present a family of such cases with a bound on $\text{PoA}(\Gamma_{\text{DTE}})$.

\begin{definition}
(Simple Task)  A task ${\tau_i= \{\ell_i, t^a_i,t^d_i,v_i\}}$ is simple if it can be completed by one robot in one time step, i.e., the value function has the form
\begin{equation}
    \label{defeq}
    v_i(\mathbf{c}_i(\mathbf{p}))=\left\{\begin{array}{ll} \bar{v}_i, \mbox{ if $\max (\mathbf{c}_i(\mathbf{p})) \geq 1$,} \\ 0,\mbox{ otherwise.}\end{array}\right.
\end{equation}
\end{definition}

One real-life example of a simple task is an aerial monitoring task that requires taking images from a specific location within a specific time window. When the grid cells correspond to sufficiently small regions, such a monitoring task can be completed by a single drone within a single time step. Similarly, certain pick-up and delivery or manipulation tasks can be completed by a single robot in a single time step.

\begin{theorem}
\label{POA-T}
Let $\Gamma_{\textnormal{DTE}}=(R,A,U)$ be designed as per \eqref{actset} and \eqref{util}. For a system with $n$ robots and $m$ tasks, if there is only one station and all the tasks are simple, then the price of anarchy of $\Gamma_{\textnormal{DTE}}$ is bounded as
\begin{equation}
\label{thmeq-poa1}
\textnormal{PoA}(\Gamma_{\textnormal{DTE}}) \leq \max \left (\dfrac{m}{n}, 1 \right).
\end{equation}
\end{theorem}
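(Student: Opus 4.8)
The plan is to fix an arbitrary Nash equilibrium $\mathbf{p}^*$ of $\Gamma_{\textnormal{DTE}}$ and bound $\max_{\mathbf q\in \mathbf P}f(\mathbf q)/f(\mathbf p^*)$. Two elementary observations set things up. Because every task is simple, for any profile $\mathbf p$ the value collected from $\tau_j$ is $\bar v_j$ precisely when at least one robot stays at $\ell_j$ within the window, and $U_i(\mathbf p)=\sum_j \bar v_j$ summed over exactly those tasks for which $r_i$ is the \emph{unique} server; in particular $\sum_i U_i(\mathbf p^*)\le f(\mathbf p^*)$. Because there is a single station, all robots share the same feasible-trajectory set, so by the defining property \eqref{actionsetc2} of the action sets, for every task $\tau_j$ that is served by \emph{some} feasible trajectory there is, for \emph{each} robot $r_i$, an action in $A_i$ serving $\tau_j$; call such a task \emph{reachable}, and note $\max_{\mathbf q\in\mathbf P}f(\mathbf q)\le \sum_{\tau_j\textnormal{ reachable}}\bar v_j$ since unreachable tasks always contribute $0$.

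The key structural step is a dichotomy for $\mathbf{p}^*$. Suppose some robot $r_i$ has $U_i(\mathbf p^*)=0$; I claim all reachable tasks are completed at $\mathbf p^*$. Otherwise there is an uncompleted reachable $\tau_u$, and $r_i$ could deviate to an action in $A_i$ that serves $\tau_u$, obtaining utility at least $\bar v_u>0>U_i(\mathbf p^*)$, contradicting equilibrium. Hence either all reachable tasks are completed — in which case $f(\mathbf p^*)=\sum_{\textnormal{reachable}}\bar v_j\ge \max_{\mathbf q\in\mathbf P}f(\mathbf q)$, so $\mathbf p^*$ is optimal — or $U_i(\mathbf p^*)>0$ for every robot. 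In the latter (suboptimal) case some reachable $\tau_u$ is necessarily uncompleted, and the same deviation argument now yields the stronger conclusion: for every robot $r_i$ and every uncompleted reachable task $\tau_u$, $U_i(\mathbf p^*)\ge \bar v_u$ (if not, $r_i$ would deviate to serve $\tau_u$ and strictly gain, since no other robot serves $\tau_u$).

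The proof then concludes with a counting argument in the suboptimal case. Each of the $n$ robots is the unique server of at least one completed task, and distinct robots are unique servers of distinct tasks, so at least $n$ tasks are completed; hence the set $\mathcal U$ of uncompleted reachable tasks satisfies $|\mathcal U|\le m-n$. Let $v^\star=\max_{\tau_u\in\mathcal U}\bar v_u$. By the previous step $v^\star\le U_i(\mathbf p^*)$ for all $i$, so $v^\star\le \tfrac1n\sum_i U_i(\mathbf p^*)\le f(\mathbf p^*)/n$. Therefore
\[
\max_{\mathbf q\in\mathbf P}f(\mathbf q)\le \sum_{\textnormal{reachable}}\bar v_j = f(\mathbf p^*)+\sum_{\tau_u\in\mathcal U}\bar v_u \le f(\mathbf p^*)+|\mathcal U|\,v^\star \le f(\mathbf p^*)+(m-n)\tfrac{f(\mathbf p^*)}{n}=\tfrac{m}{n}f(\mathbf p^*).
\]
Since in the complementary case $\mathbf p^*$ is optimal, every Nash equilibrium satisfies $\max_{\mathbf q\in\mathbf P}f(\mathbf q)/f(\mathbf p^*)\le\max(m/n,1)$, which (using that the global maximizer of $f$ is itself a Nash equilibrium) is exactly \eqref{thmeq-poa1}.

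I expect the main obstacles to be two. First, justifying that a single robot can always "pick up" an uncompleted reachable task via one admissible deviation: this is where both the single-station hypothesis (so that all robots share the same reachable set, guaranteeing the deviating action lies in $A_i$) and simplicity (so that one robot staying once already yields the full value $\bar v_u$) are indispensable. Second, obtaining the refined factor $m/n$ rather than the factor $2$ that a routine smoothness/best-response argument produces: this requires the sharper equilibrium property $U_i(\mathbf p^*)\ge v^\star$ for all $i$ together with the combinatorial bound $|\mathcal U|\le m-n$, and care must be taken that the argument degrades gracefully when $m\le n$ (there the dichotomy forces optimality, matching $\max(m/n,1)=1$).
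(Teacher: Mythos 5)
Your proof is correct and follows essentially the same argument as the paper's: the Nash condition forces every robot's equilibrium utility to be at least the value of any incomplete (servable) task, the utilities sum to at most $f(\mathbf{p}^*)$, and the observation that at least $n$ tasks are completed (one uniquely-served task per robot) bounds the number of incomplete tasks by $m-n$, giving the factor $m/n$. Your explicit restriction to \emph{reachable} tasks is a small but welcome refinement --- the paper's deviation step implicitly assumes every incomplete task can actually be served from the single station within the episode horizon, and your bookkeeping makes that rigorous without changing the structure of the argument.
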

\begin{proof}
 Consider a single-station game with $n$ robots and $m$ simple tasks. Note that in such a game, all the robots have identical action sets. Let $A^* \subseteq A$ be the set of Nash equilibria, and let $\mathbf{p}^* \in A^*$ be any Nash equilibrium of the game. We analyze each of the two possible cases separately and show that \eqref{thmeq-poa1} holds in both cases:
 
\emph{ Case 1 - All tasks are completed under $\mathbf{p}^*$}: In this case, clearly $ f(\mathbf{p}^*)=\max\limits_{\mathbf{p} \in A^*}f(\mathbf{p})$ since $f(\mathbf{p})$ cannot exceed the total value of tasks in $\tau$. 

\emph{Case 2 - Some tasks are not completed under $\mathbf{p}^*$}: Let $\tau'\neq  \emptyset$ be the set of incomplete task under $\mathbf{p}^*$. Since all tasks are simple, any robot could switch to a trajectory completing some $\tau_j \in\tau'$ to receive a utility of $\bar{v}_j$. {Accordingly, since $\mathbf{p}^*$ is a Nash equilibrium, each agent's utility must be larger than the value of any incomplete task, i.e.,
\begin{equation}
\label{thmeq-poa2}
U_i(\mathbf{p}^*) \geq \max_{\tau_j \in \tau'}\bar{v}_j, \; \forall i \in \{1, \hdots,n\},
\end{equation}
which implies that each agent must be receiving a positive utility by being essential for the completion of at least one task (the task would be incomplete without that agent) due to \eqref{util}. Since each task is simple, multiple agents cannot be essential for the same completed task. Hence, at least $n$ tasks must be completed. Furthermore, each completed task's value is included in at most one agent's utility. Hence, the total value of completed tasks cannot be less than the total utility of the agents, i.e.,
\begin{equation}
\label{thmeq-poa2a}
\sum_{\tau_j \in \tau\setminus\tau'} \bar{v}_j \geq \sum_{i=1}^n U_i(\mathbf{p}^*).
\end{equation}
Since the number of completed tasks is at least $n$, the total value of incomplete tasks must be upper bounded by $m-n$ times the value of the most valuable incomplete task, i.e.,
\begin{equation}
\label{thmeq-poa2b}
\sum_{\tau_j \in \tau'} \bar{v}_j \leq  (m-n)\max_{\tau_j \in \tau'}\bar{v}_j.
\end{equation}
Using \eqref{thmeq-poa2} and \eqref{thmeq-poa2a}, we obtain
\begin{equation}
\label{thmeq-poa3}
f(\mathbf{p}^*) = \sum_{\tau_j \in \tau\setminus\tau'} \bar{v}_j \geq \sum_{i=1}^n U_i(\mathbf{p}^*) \geq n \max_{\tau_j \in \tau'}\bar{v}_j.
\end{equation}
Since $\max\limits_{\mathbf{p} \in A^*}f(\mathbf{p})$ cannot exceed the total value of tasks in $\tau$,  we have
\begin{equation}
\label{thmeq-poa4}
\max\limits_{\mathbf{p} \in A^*}f(\mathbf{p}) \leq  \sum_{\tau_j \in \tau\setminus\tau'} \bar{v}_j + \sum_{\tau_j \in \tau'} \bar{v}_j = f(\mathbf{p}^*)+ \sum_{\tau_j \in \tau'} \bar{v}_j.
\end{equation}
Using \eqref{thmeq-poa2b}, \eqref{thmeq-poa3}, and \eqref{thmeq-poa4}, we obtain
\begin{equation}
\label{thmeq-poa4a}
\max\limits_{\mathbf{p} \in A^*}f(\mathbf{p}) \leq  f(\mathbf{p}^*)+ (m-n) \max_{\tau_j \in \tau'}\bar{v}_j \leq f(\mathbf{p}^*) + \frac{m-n}{n}f(\mathbf{p}^*),
\end{equation}
which implies
$\max\limits_{\mathbf{p} \in A^*}f(\mathbf{p})/f(\mathbf{p}^*) \leq m/n.$ Note that our analyses of Cases 1 and 2 together imply \eqref{thmeq-poa1}}.
\qed \end{proof}
In light of Theorem \ref{POA-T}, for single-station systems with simple tasks, $\textnormal{PoA}(\Gamma_{\textnormal{DTE}})$ is bounded from above based on the number of tasks and robots, irrespective of the task values $\bar{v}_i$. While there can also be many other special cases where $\text{PoA}(\Gamma_{\text{DTE}})$ is guaranteed to be small, in general it can be arbitrarily large when the task specifications do not have a special structure as we have shown in Example 3. Accordingly, noisy best-response learning algorithms such as LLL are needed to ensure near-optimal performance in $\Gamma_{\text{DTE}}$ when there is no prior information on task specifications that indicates a small $\text{PoA}(\Gamma_{\text{DTE}})$.
{ \subsection{Convergence Rate} The proposed approach is based on providing the robots with the specifications of tasks in the upcoming episode and having them optimize their own trajectories in a distributed manner via learning in the the corresponding game $\Gamma_{DTE}$. In practice, the robots have a limited amount of time to plan their trajectories before each episode. 
Hence, in addition to the limiting behavior of the learning process, another important consideration is its convergence rate.

 Convergence time to a Nash equilibrium under BR depends on the type of potential game. There exist examples for both exponential (e.g.,\cite{durand2016complexity}) and polynomial (e.g., \cite{even2003convergence, babichenko2016graphical}) growth of convergence time in the number of agents. On the other hand, the rate of convergence to the limiting distribution $\mu_{\epsilon}$ under LLL depends on both the type of game and the noise parameter $\epsilon$. As $\epsilon$ gets closer to zero, the mass of the limiting distribution $\mu_\epsilon$ accumulates on the  potential maximizers (optimal Nash equilibria) as per \eqref{sstab}. However,  the convergence to $\mu_\epsilon$ becomes slower as $\epsilon$ decreases since it also becomes harder for the agents to leave suboptimal Nash equilibria. In other words, there is an inherent trade-off between the efficiency of the limiting distribution $\mu_\epsilon$ and the time it takes to approach $\mu_\epsilon$, which can be tuned via the noise parameter $\epsilon$.  Similar to BR, LLL also has examples of potential games for both exponential and polynomial (e.g., almost linear) growth of convergence time in the number of agents (e.g.,  \cite{kreindler2013fast,asadpour2009inefficiency, borowski2015fast, shah2010dynamics,ellison1993learning}). Our numerical results in the next section suggest that both algorithms yield a fast convergence rate in the proposed game $\Gamma_{DTE}$. While BR sometimes produces very suboptimal solutions, LLL typically produces a near-optimal solution in a reasonable amount of time.

}

{ \section{Simulation Results}
\label{sims}
We consider the environment in Fig. 1 and present two case studies to numerically demonstrate the performance of the proposed distributed planning approach based on game-theoretic learning. 

\subsection{Case Study 1}
In the first case study, we demonstrate and compare the performances under BR and LLL in a sample scenario. We consider a team of 10 robots that are allocated to the stations as follows: 4 robots at $s_1$, 4 robots at $s_2$, and 2 robots at $s_3$. Robots plan their own trajectories over an episode of length $T=8$. In this episode, there are seven tasks whose specifications are provided in Table \ref{Tab-tasks}. Each task has a value function with one of the two following structures:

\begin{equation}
    \label{sim-vex1}
    v_i(\mathbf{c}_i(\mathbf{p}))=\left\{\begin{array}{ll} \bar{v}_i, \mbox{ if $\max (\mathbf{c}_i(\mathbf{p})) \geq c^*_i$,} \\ 0,\mbox{ otherwise,}\end{array}\right., 
\end{equation}
\begin{equation}
    \label{sim-vex2}
    v_i(\mathbf{c}_i(\mathbf{p}))=\left\{\begin{array}{ll} \bar{v}_i, \mbox{ if $\mathbf{1}^{\text{T}}\mathbf{c}_i(\mathbf{p}) \geq c^*_i$,} \\ 0,\mbox{ otherwise,}\end{array}\right.,
\end{equation}
where the parameters $\bar{v}_i$ and $c^*_i$ for each task are provided in Table \ref{Tab-tasks}. The value function $v_i$ has the structure in \eqref{sim-vex1} for $i \in \{3,5\}$ and the structure in \eqref{sim-vex2} for  $i \in \{1,2,4,6,7\}$. 

We first investigate the performance under LLL by randomly generating 100 runs. In each run,  the robots start the distributed planning process with randomly selected initial trajectories and follow the LLL algorithm with $\epsilon=0.2$ for 300 rounds. In this scenario, given the episode length $T=8$, the number of feasible trajectories that start and end at the corresponding stations are as follows: 405417 (station $s_1$), 161708 (station $s_2$), and  9254 (station $s_3$). Designing the action sets as per \eqref{actset} significantly reduces the number of trajectories for each agent. The resulting number of trajectories in each action set is as follows: 39 (station $s_1$), 16 (station $s_2$), and  18 (station $s_3$). Note that this significant reduction in the number of trajectories greatly improves the scalability of the proposed approach by reducing not only the number of rounds required to observe the limiting behavior but also the computation time required in each round by the updating agent. On a computer with Core i7 – 2.2 GHz CPU and 16 GB RAM, the computation of each action set as per \eqref{actset} takes approximately 10 seconds whereas executing the learning algorithm for 300 rounds (pick a random agent at each round and update its action as per LLL) takes about 15 seconds.

In  Fig. \ref{siml}, we show the evolution of the total value of tasks that would be completed under the trajectories in each round $k$, i.e.,  $\mathbf{p}(k)$, as the robots keep updating their trajectories over 300 rounds. The top figure in Fig. \ref{siml} shows a sample run, where the robots have reached a joint plan that completes all the tasks and yields a total value of 30 within 140 rounds.  Given the stochasticity of the learning process, the bottom figure in Fig. \ref{siml} aims to illustrate the expected behavior under LLL in this scenario by showing the average (blue solid line) as well as the maximum and minimum (dotted red lines) of the total value at each round based on the results from 100 runs. The maximum possible total value, i.e., 30 from completing all tasks, was first observed in one of the runs in round 11. Furthermore, after round 107, all 100 runs maintain a total value in the interval 25-30 while the average keeps increasing towards 30, i.e., the number of runs maintaining the maximum possible value increases over time.  The average total values at some instances were observed as: 25.85 at round 50, 26.79 at round 100, 27.57 at round 200, and 27.87 at round 300. This trend is consistent with the typical behavior when LLL is used in potential games: agents reach a near-optimal configuration in a relatively short amount of time and they eventually maintain the global optima with high probability.

\begin{table}\scriptsize \centering
{\renewcommand{\arraystretch}{1.2}
\begin{center}
\begin{tabular}{cccccc} \toprule
    {Task ID ($i$)} & {Location ($\ell_i$)} & {Arrival $(t^a_i)$} & {Departure $(t^d_i)$} & {Value $(\bar{v}_i)$} & {Threshold $(c^*_i)$} \\ \midrule
    1  & (3,3) & 1 & 7 & 4 & 6  \\ 
     2  & (2,3) & 0 & 5 & 5 & 6  \\
    3  & (6,5) & 2 & 6 & 3 & 2  \\
   4  & (2,1) & 1 & 7 & 5 & 7  \\
    5  & (4,1) & 3 & 6 & 4 & 2  \\
    6  & (6,2) & 0 & 8 & 5 & 8  \\
     7  & (5,5) & 0 & 8 & 4 & 5  \\
   \bottomrule
\end{tabular}
\end{center}}
\caption{Task specifications for the first case study.}
  \label{Tab-tasks}
\end{table}
\begin{figure}[htb]
\centering
\includegraphics[trim =8mm 0mm 0mm 5mm, clip,scale=0.4]{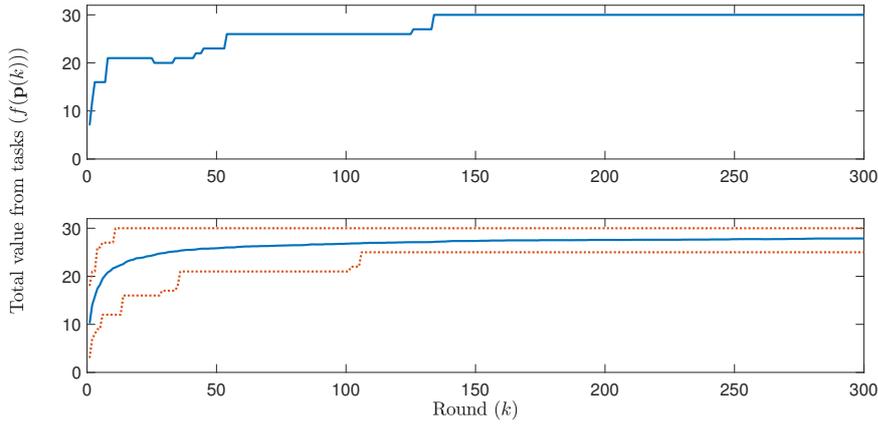}
\caption{ Evolution of the total value in the first case study while the robots update their trajectories by following LLL for 300 rounds. The top figure shows a sample run, the bottom figure shows the average (solid blue line) as well as the maximum and minimum (dotted red lines) values at each round for 100 runs starting with random initial trajectories. }
\label{siml}
\end{figure}

As a second set of simulations, we also utilize the best response (BR) algorithm for the same scenario. Given that the computation time in each round is similar for LLL and BR, we execute the BR algorithm also for 300 rounds. Under such updates, the system rapidly converges to one of the Nash equilibria, usually within the first 10-20 rounds in this example. The resulting equilibrium largely depends on the initial trajectories and which agent updates in each round. { In Fig. \ref{BR-hist} we show the distribution of $f(\mathbf{p})$ at the resulting Nash equilibria for 1000 runs, each of which was started with a random choice of initial trajectories. Thirteen different values of $f(\mathbf{p})$ were observed at the resulting equilibria with the following numbers of occurrences: 11 (1 time), 16 (11 times), 17 (15 times), 18 (9 times), 19 (13 times), 20 (16 times), 21 (127 times), 22 (158 times), 23 (173 times), 25 (105 times) , 26 (270 times), 27 (73 times), and 30 (29 times), which yield an average of 23.78.  In comparison, the runs with LLL resulted in an expected total value of 27.87 at the end of 300 rounds. Furthermore, while BR was observed to result in a significantly suboptimal Nash equilibrium in some cases (e.g., a total value of 11 or 16), none of the runs with LLL produced a total value smaller than 25 as it can be seen in Fig. \ref{siml}. }

\begin{figure}[htb!]
\centering
\includegraphics[trim =8mm 0mm 0mm 0mm, clip,scale=0.4]{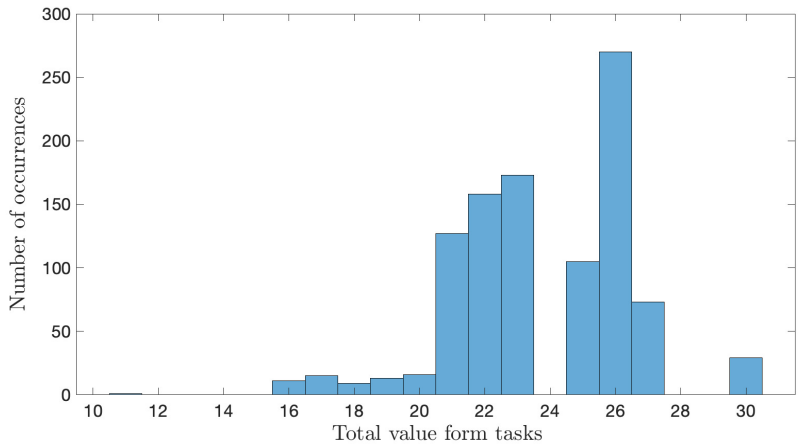}
\caption{ Distribution of the total value of completed tasks $f(\mathbf{p})$ at the Nash equilibria resulting from BR in 200 runs with random initial trajectories.  }
\label{BR-hist}
\end{figure}

\subsection{Case Study 2}
In the second case study, we investigate the performance of LLL in 9 scenarios with different numbers of tasks and robots. To this end, we first create a list of 30 tasks as shown in Table \ref{Tab-tasks2}. Among these 30 tasks, the value function $v_i$ has the structure in \eqref{sim-vex1} for every $i \in \{3,5,12,14,18,23,25,28\}$ and the structure in \eqref{sim-vex2} for all the other tasks.  We consider three different sets of tasks from this list: 1) 10 tasks (1 to 10 in Table \ref{Tab-tasks2}), 2) 20 tasks (1 to 20 in Table \ref{Tab-tasks2}), and 3) 30 tasks (all the tasks in Table \ref{Tab-tasks2}). We also consider three different cases in terms of the number of robots: 1) 5 robots (2 at $s_1$, 2 at $s_2$, and 1 at $s3$), 2) 10 robots (4 at $s_1$, 4 at $s_2$, and 2 at $s3$), and 3) 15 robots (6 at $s_1$, 6 at $s_2$, and 3 at $s3$). By considering all combinations of these sets of tasks and sets of robots, we obtain a total of 9 scenarios where the number of tasks ranges from 10 to 30 and the number of robots ranges from 5 to 15. Given the task specifications and the episode length $T=8$, designing the action sets as per \eqref{actset} leads to the following numbers of trajectories in these scenarios:
\begin{itemize}
    \item {\bf Scenarios with 10 tasks:} 88 trajectories for each robot at $s_1$, 70 trajectories for each robot at $s_2$, and 26 trajectories for each robot at $s_3$
    \item {\bf Scenarios with 20 tasks:} 667 trajectories for each robot at $s_1$, 238 trajectories for each robot at $s_2$, and 84 trajectories for each robot at $s_3$
    \item {\bf Scenarios with 30 tasks:} 686 trajectories for each robot at $s_1$, 415 trajectories for each robot at $s_2$, and 128 trajectories for each robot at $s_3$
\end{itemize}


Since the size of the action space grows exponentially with the number of robots $n$, i.e., $|A_1| \times |A_2| \times \hdots \times |A_n|$, it is clearly intractable to find the maximum feasible value in these scenarios by searching through all possible joint plans (i.e., a centralized solution). The size of action space ranges from approximately $9.86 \times 10^8$ action profiles in the smallest case (5 robots and 10 tasks) to approximately $1.1 \times 10^{39}$ action profiles in the largest case (15 robots and 30 tasks). For each of the 9 scenarios, we generate 10 runs, each of which starts with randomly picked initial trajectories and executes LLL with the noise parameter $\epsilon=0.2$ for a total of 600 rounds. For each set of 10 runs, we provide the average, minimum, and maximum of total value at each round in Figs. \ref{5robsims} (scenarios with 5 robots), \ref{10robsims} (scenarios with 10 robots), and \ref{15robsims} (scenarios with 15 robots). In all of these figures, we see that the minimum and maximum values among the runs rapidly increase and approach each other, which creates a narrow envelope for the average value. In these simulations, the minimum, average, and maximum values of 10 runs at the end of round 600 are observed as follows:
\begin{itemize}
    \item {\bf Scenarios with 5 robots:} 19, 19.7, 20 (10 tasks); 29, 30.1, 31 (20 tasks); 29, 30.1, 31 (30 tasks).
    \item {\bf Scenarios with 10 robots:} 26, 26, 26 (10 tasks); 46, 48.6, 51 (20 tasks); 53, 56.2, 59 (30 tasks).
    \item {\bf Scenarios with 15 robots:} 26, 26, 26 (10 tasks); 58, 59.2, 64 (20 tasks); 66, 74.5, 80 (30 tasks).
\end{itemize}

In 3 of these 9 scenarios, we can easily verify that the maximum value observed is equal to the maximum feasible value since it corresponds to the completion of all the tasks: 10 robots and 10 tasks (total value is 26), 15 robots and 10 tasks, 15 robots and 20 tasks (total value is 64). Furthermore, in each of these 9 scenarios, the average value of 10 runs at the end of 600 rounds is more than 92\% of the maximum value. These results suggest that the robots are expected to obtain near-optimal joint plans by following LLL for 600 rounds in all of these scenarios. Overall, LLL has maintained a fast convergence speed (in 600 rounds) despite the significant growth in the sizes of action spaces, i.e., from approximately $9.86 \times 10^8$ action profiles in the smallest scenario (5 robots and 10 tasks) to approximately $1.1 \times 10^{39}$ action profiles in the largest scenario (15 robots and 30 tasks). 




\begin{table}\scriptsize \centering
{\renewcommand{\arraystretch}{1.2}
\begin{center}
\begin{tabular}{cccccc} \toprule
    {Task ID ($i$)} & {Location ($\ell_i$)} & {Arrival $(t^a_i)$} & {Departure $(t^d_i)$} & {Value $(\bar{v}_i)$} & {Threshold $(c^*_i)$} \\ \midrule
    1  & (3,3) & 1 & 7 & 4 & 6  \\ 
     2  & (2,3) & 0 & 5 & 3 & 2  \\
    3  & (6,5) & 2 & 6 & 3 & 2  \\
   4  & (2,1) & 1 & 7 & 2 & 2  \\
    5  & (4,1) & 3 & 6 & 3 & 2  \\
    6  & (6,2) & 0 & 8 & 2 & 2  \\
     7  & (5,5) & 0 & 8 & 4 & 4  \\
     8  & (7,4) & 3 & 8 & 2 & 2 \\
         9  & (1,2) & 0 & 5 & 1 & 1  \\ 
     10  & (5,2) & 5 & 8 & 2 & 1  \\
    11  & (7,2) & 2 & 5 & 3 & 2  \\
   12  & (3,2) & 3 & 7 & 6 & 3  \\
    13  & (3,4) & 0 & 4 & 3 & 3  \\
    14  & (1,1) & 0 & 4 & 3 & 2  \\
     15  & (5,1) & 2 & 5 & 3 & 3  \\
     16  & (3,4) & 4 & 7 & 5 & 2 \\
         17  & (3,1) & 2 & 8 & 5 & 2  \\ 
     18  & (1,3) & 0 & 8 & 6 & 3  \\
    19  & (7,4) & 0 & 3 & 2 & 1  \\
   20  & (6,4) & 0 & 4 & 2 & 1  \\
     21  & (7,3) & 1 & 3 & 3 & 4  \\
   22  & (7,3) & 4 & 8 & 2 & 2  \\
    23  & (2,3) & 5 & 7 & 3 & 8  \\
    24  & (1,2) & 5 & 8 & 4 & 4  \\
     25  & (3,5) & 1 & 3 & 2 & 4  \\
     26  & (5,2) & 2 & 5 & 5 & 5 \\
         27  & (3,5) & 5 & 8 & 2& 4  \\ 
     28  & (1,1) & 4 & 8 & 2 & 3  \\
    29  & (6,4) & 4 & 6 & 1 & 2  \\
   30  & (6,4) & 6 & 8 & 1 & 2  \\
   \bottomrule
\end{tabular}
\end{center}}
\caption{Task specifications for the second case study.}
  \label{Tab-tasks2}
\end{table}

\begin{figure}[htb]
\centering
\includegraphics[trim =20mm 0mm 0mm 5mm, clip,scale=0.36]{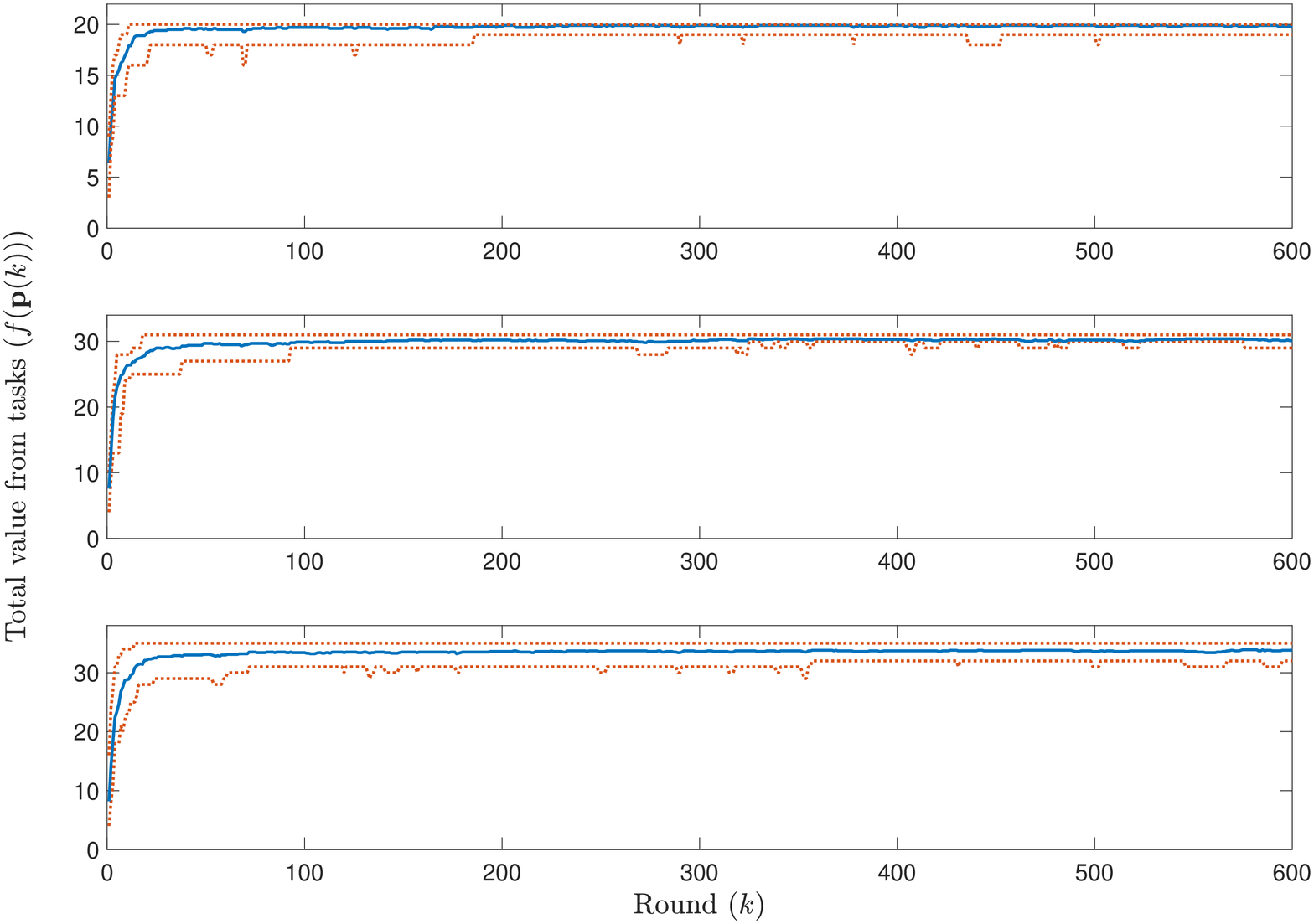}
\caption{Evolution of the total value under LLL in the second case study for the scenarios with 5 robots. Figures correspond to 10 (top), 20 (middle), and 30 (bottom) tasks. Each figure shows the average (solid blue line) as well as the maximum and minimum (dotted red lines) values at each round for 10 runs.  }
\label{5robsims}
\end{figure}

\begin{figure}[htb]
\centering
\includegraphics[trim =20mm 0mm 0mm 5mm, clip,scale=0.36]{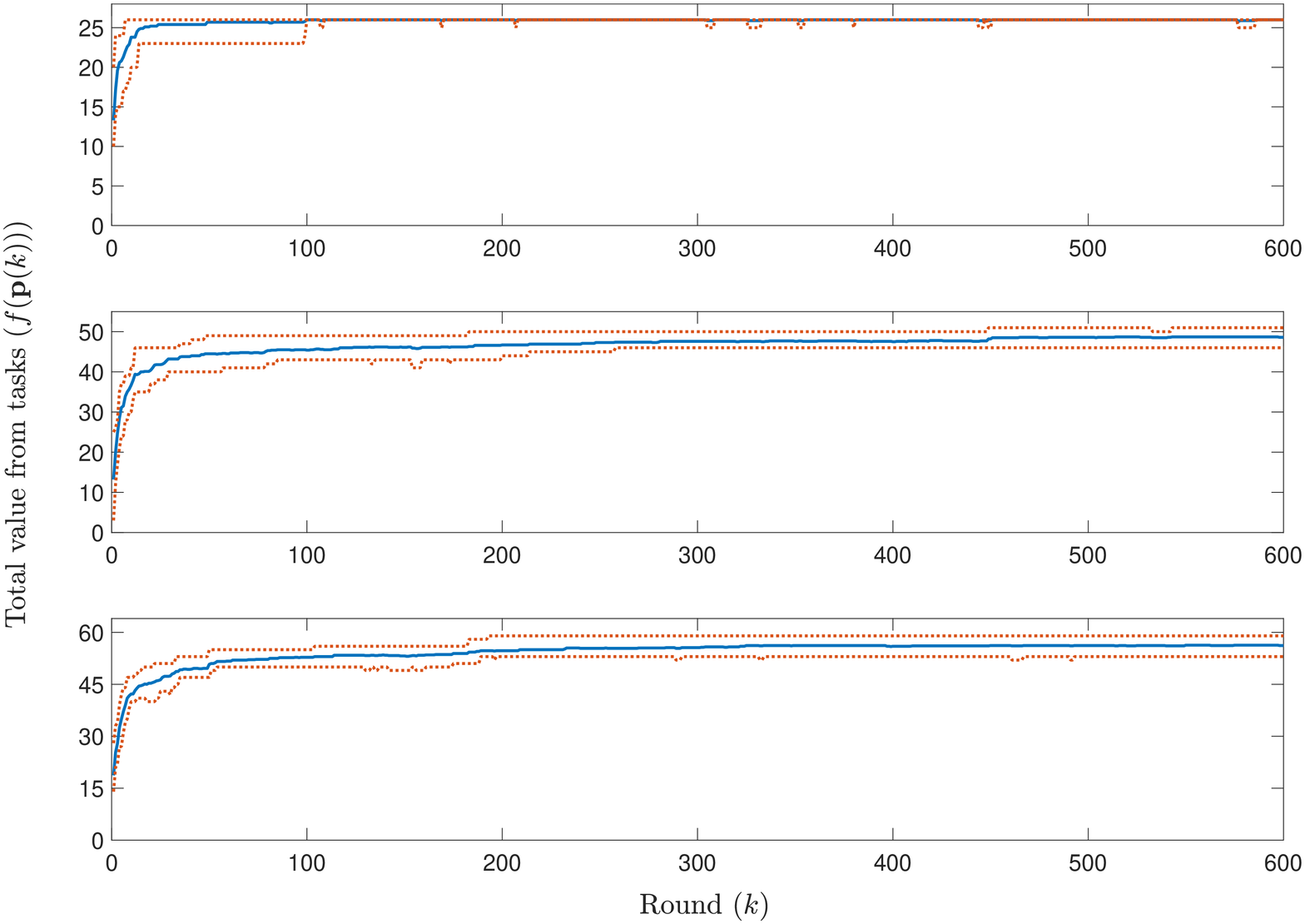}
\caption{Evolution of the total value under LLL in the second case study for the scenarios with 10 robots. Figures correspond to 10 (top), 20 (middle), and 30 (bottom) tasks. Each figure shows the average (solid blue line) as well as the maximum and minimum (dotted red lines) values at each round for 10 runs.  }
\label{10robsims}
\end{figure}

\begin{figure}[htb]
\centering
\includegraphics[trim =20mm 0mm 0mm 5mm, clip,scale=0.36]{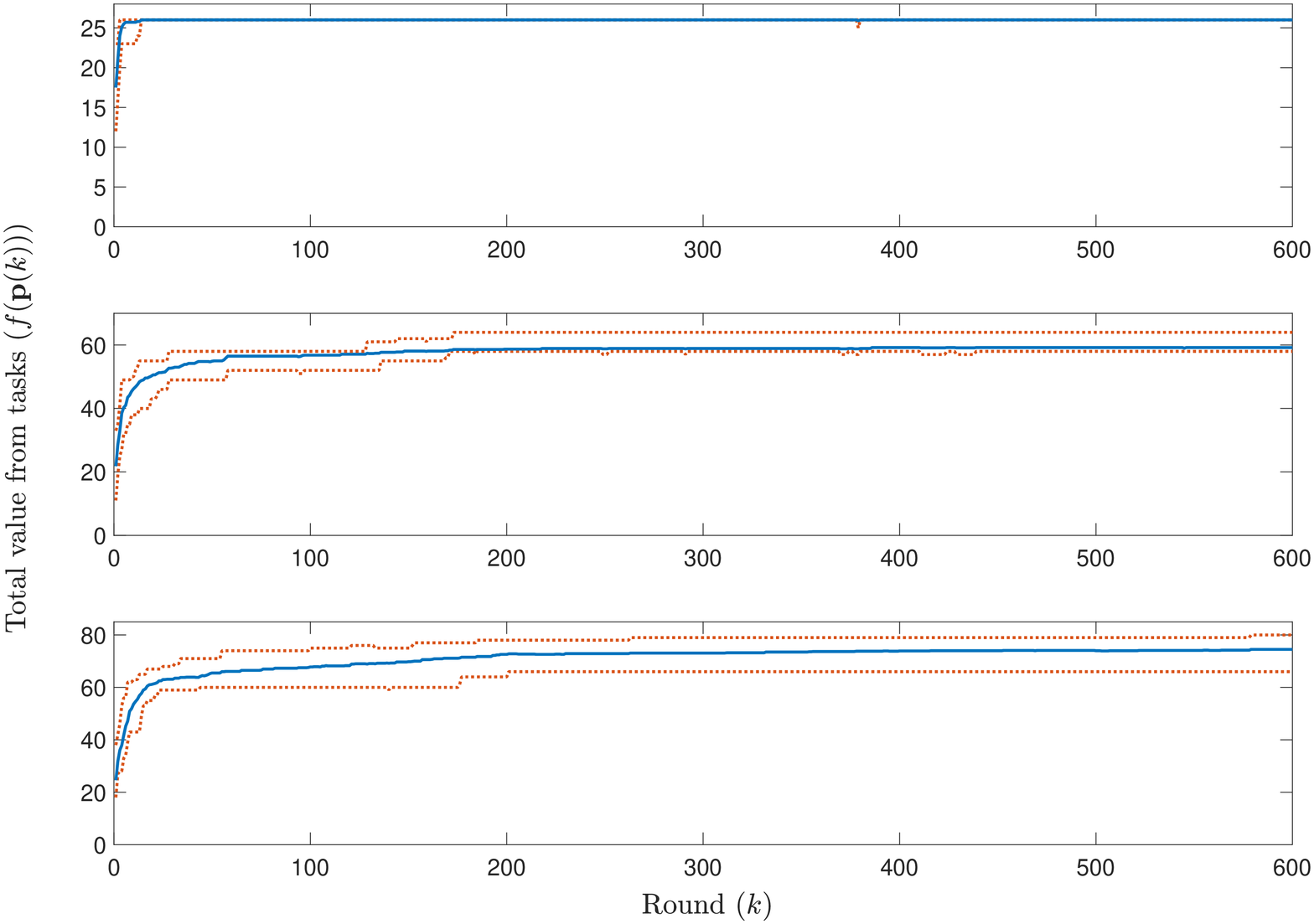}
\caption{ Evolution of the total value under LLL in the second case study for the scenarios with 15 robots. Figures correspond to 10 (top), 20 (middle), and 30 (bottom) tasks. Each figure shows the average (solid blue line) as well as the maximum and minimum (dotted red lines) values at each round for 10 runs. }
\label{15robsims}
\end{figure}
}

\section{Experimental Results} 
\label{exp-res}

We present the results of experiments with a team of three Crazyflies 2.0 in a $3m\times3m\times1.5m$ motion-capture space equipped with a \texttt{VICON} system with eight cameras. 
We use the Crazyswarm\footnote{\url{https://github.com/USC-ACTLab/crazyswarm}} package \cite{preiss2017crazyswarm} to run the low-level control algorithms and to link the \texttt{VICON} system with the Crazyflies. The experiment is performed by using a desktop computer with 4 cores running Ubuntu 16.04, 4.0GHz CPU, and 32GB RAM.

The experiment is designed as a small scale representation of an aerial monitoring scenario in the environment shown in Fig. \ref{env}, where the obstacles correspond to no-fly zones. Three drones $\{r_1,r_2,r_3\}$,  each of which is assigned to a different station ($r_1$ at $s_1$, $r_2$ at $s_2$, and $r_3$ at $s_3$),  optimize their trajectories to serve the incoming monitoring/surveillance tasks. In this experiment, each drone is assigned to a different altitude to avoid potential collisions. We consider episodes of length $T=8$ for moving between regions plus two additional time steps for take-off and landing (one time step for each). Each time step corresponds to two seconds in real time. Accordingly, each episode implies a flight time of 20 seconds for the drones (16 seconds for traversing their trajectories over the episode horizon of $T=8$ steps and 4 seconds for take-off/landing). The experiment consists of five episodes, each of which involved a subset of the eight tasks whose specifications are listed in Table \ref{Tab-tasks2}. The value function $v_i$ has the structure in \eqref{sim-vex1} for $i \in \{3,5\}$ and the structure in \eqref{sim-vex2} for  $i \in \{1,2,4,6,7,8\}$. For an aerial monitoring application, the value function in \eqref{sim-vex1} may correspond to a task that requires multiple aerial images taken simultaneously from different viewpoints, and the value function in \eqref{sim-vex2} may correspond to a task that does not require such a simultaneity in the images. 

The tasks arriving in each episode and the resulting number of trajectories in the action sets in \eqref{actset} for each robot are provided in Table \ref{episodes}. For each episode, starting with randomly selected initial trajectories, the drones update their trajectories by following LLL for a period of 50 rounds. { The evolution  of the total value from tasks as the robots update their trajectories during the learning process is shown in Fig. \ref{exp-LLL}. The trajectories obtained at the end of 50 rounds are provided in Table \ref{Tab-tasks3}. Based on the specifications of the tasks in each episode as given in Tables \ref{Tab-tasks2} and \ref{episodes}, it can be verified that all the tasks are completed under the trajectories in Table \ref{Tab-tasks3}. Accordingly, the total values of completed tasks are as follows: 11 (episode 1), 11 (episode 2), 10 (episode 3), 12 (episode 4), 10 (episode 5). In all the episodes, the maximum possible value was reached within the first 12 rounds of learning and maintained throughout the remaining rounds.  }

\begin{figure}[htb]
\centering
\includegraphics[trim =8mm 0mm 0mm 5mm, clip,scale=0.55]{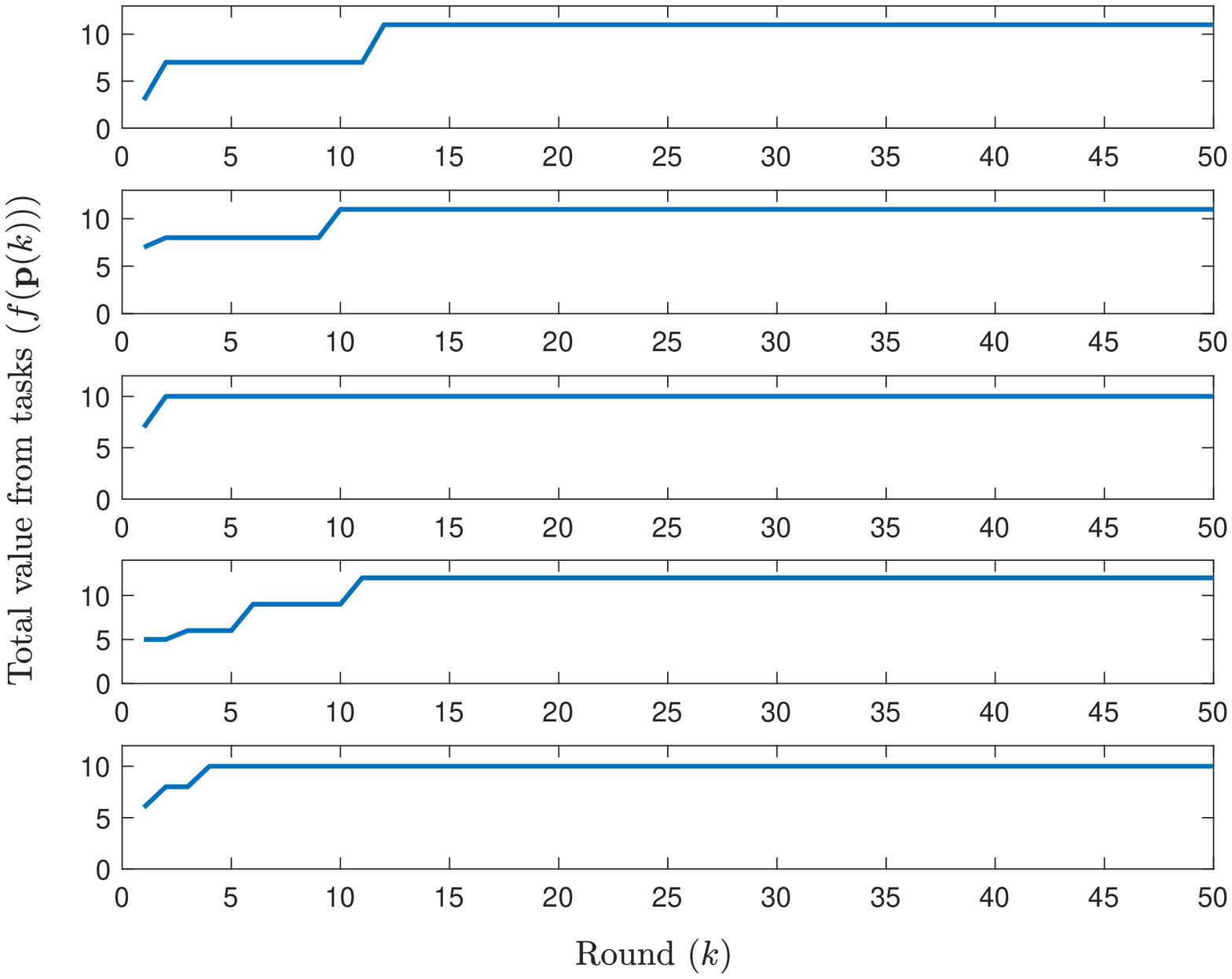}
\caption{ Total value of completed tasks over 50 rounds as the robots update their trajectories by following log-linear learning (LLL). Plots correspond to the five episodes (ordered from top to bottom) in the experiments. }
\label{exp-LLL}
\end{figure}


\begin{table}[htb!] \scriptsize \centering

{\renewcommand{\arraystretch}{1.2}
\begin{center} 
\begin{tabular}{ccccc} \toprule
    {Episode} & {Task IDs} & {$|A_1|$} & {$|A_2|$} & {$|A_3|$}  \\ \midrule
    1  & \{1,2,6,8\} & 10 & 8 & 7  \\ 
     2  & \{1,3,7\} & 7 & 6 & 8  \\
    3  & \{2,4,5,6\} & 16 & 6 & 1   \\
   4  & \{2,3,4,7\}& 8 & 6 & 11   \\
    5  & \{1,4,6,8\} & 8 & 8 & 2   \\
   \bottomrule
\end{tabular}
\end{center}
}
\caption{Tasks in each of the five episodes (see Table \ref{Tab-tasks2} for the specifications) and the resulting number trajectories in the action sets in \eqref{actset}.}
  \label{episodes}
\end{table}

\begin{table}[htb!]
\scriptsize \centering

{ \renewcommand{\arraystretch}{1.2}
\begin{center}
\begin{tabular}{c} \toprule
    {Episode 1} \\ \midrule
    \mbox{$\mathbf{p}_1=\{(2,2), (2,3), (2,3), (2,3), (3,3), (3,3), (3,3),(3,3), (2,2)\} $} \\ 
   \mbox{$  \mathbf{p}_2=\{(6,3), (6,2), (6,2), (6,2), (6,3), (7,4), (7,4),(7,4), (6,3)\}$}\\
    \mbox{$ \mathbf{p}_3=\{(4,5), (3,4), (3,3), (3,3), (3,3), (3,3), (3,3),(3,4), (4,5)\}  $}\\
    \midrule
       {Episode 2} \\ \midrule
   \mbox{$ \mathbf{p}_1=\{(2,2), (3,3), (3,3), (3,3), (3,3), (3,3), (3,3),(3,3), (2,2)\} $} \\ 
    \mbox{$ \mathbf{p}_2=\{(6,3), (6,4), (5,5), (5,5), (5,5), (6,5), (6,5),(6,4), (6,3)\}$}\\
   \mbox{$  \mathbf{p}_3=\{(4,5), (5,5), (5,5), (5,5), (6,5), (6,5), (6,5),(5,5), (4,5)\} $} \\
    \midrule   {Episode 3} \\ \midrule
   \mbox{$ \mathbf{p}_1=\{(2,2), (2,1), (2,1), (2,1), (3,1), (4,1), (4,1),(3,1), (2,2)\} $} \\ 
  \mbox{$   \mathbf{p}_2=\{(6,3), (6,2), (6,2), (6,2), (5,1), (4,1), (4,1),(5,2), (6,3)\}$}\\
  \mbox{$   \mathbf{p}_3=\{(4,5), (3,4), (2,3), (2,3), (2,3), (2,3), (2,3),(3,4), (4,5)\}  $}\\
    \midrule   {Episode 4} \\ \midrule
   \mbox{$ \mathbf{p}_1=\{(2,2), (2,3), (2,3), (2,3), (1,2), (2,1), (2,1),(2,1), (2,2)\} $} \\ 
  \mbox{$   \mathbf{p}_2=\{(6,3), (6,4), (6,5), (6,5), (6,5), (5,5), (5,5),(6,4), (6,3)\}$}\\
   \mbox{$  \mathbf{p}_3=\{(4,5), (5,5), (6,5), (6,5), (5,5), (5,5), (5,5),(5,5), (4,5)\} $} \\
    \midrule   {Episode 5} \\ \midrule
   \mbox{$ \mathbf{p}_1=\{(2,2), (3,3), (3,3), (3,3), (2,2), (2,1), (2,1),(2,1), (2,2)\} $} \\ 
   \mbox{$  \mathbf{p}_2=\{(6,3), (6,2), (6,2), (6,2), (6,3), (7,4), (7,4),(7,4), (6,3)\}$}\\
  \mbox{$   \mathbf{p}_3=\{(4,5), (3,4), (3,3), (3,3), (3,3), (3,3), (3,3),(3,4), (4,5)\}$}  \\
   \bottomrule
\end{tabular}
\end{center}
}
\caption{Trajectories obtained via LLL in each episode of the experiment. }
  \label{Tab-tasks3}
\end{table}

Some instances from the experiments are shown in Figures \ref{fig:real_experiments_1} and \ref{fig:real_experiments_2}. The drones indicated by arrows start the mission at their stations. The first episode involves tasks 1, 2, 6, and 8. As per the trajectories in Table \ref{Tab-tasks3}, initially $r_1$ serves task 2, $r_2$ serves task 6 and $r_3$ serves task 1 as highlighted in Fig.~\ref{fig:sfig2}. Once task 2 is completed, $r_1$ joins $r_3$ in serving task 1. At the same time, $r_2$ finishes task 6 and moves to its next destination to serve task 8 as shown in Fig.~\ref{fig:sfig3}. The drones complete all the tasks and return to their stations by the end of the first episode as shown in Fig.~\ref{fig:sfig4}. The second episode involves tasks 1, 3, and 7. First tasks 1 and 7 arrive. Task 1 is served by $r_1$. Task 7 is served by $r_3$ for one time step and then $r_2$ joins $r_3$ to complete the task faster as shown in Fig.~\ref{fig:sfig6}. After completing task 7, $r_2$ and $r_3$ together serve task 3, which requires at least two drones to be simultaneously present for one time step, while $r_1$ continues serving task 1 as shown in Fig.~\ref{fig:sfig7}. Finally, the drones complete tasks 1 and 3 and go back to their stations as shown in Fig.~\ref{fig:sfig8}.

\begin{figure}[ht]
\centering
\begin{subfigure}{.45\textwidth}
  \centering
  \includegraphics[width=\linewidth]{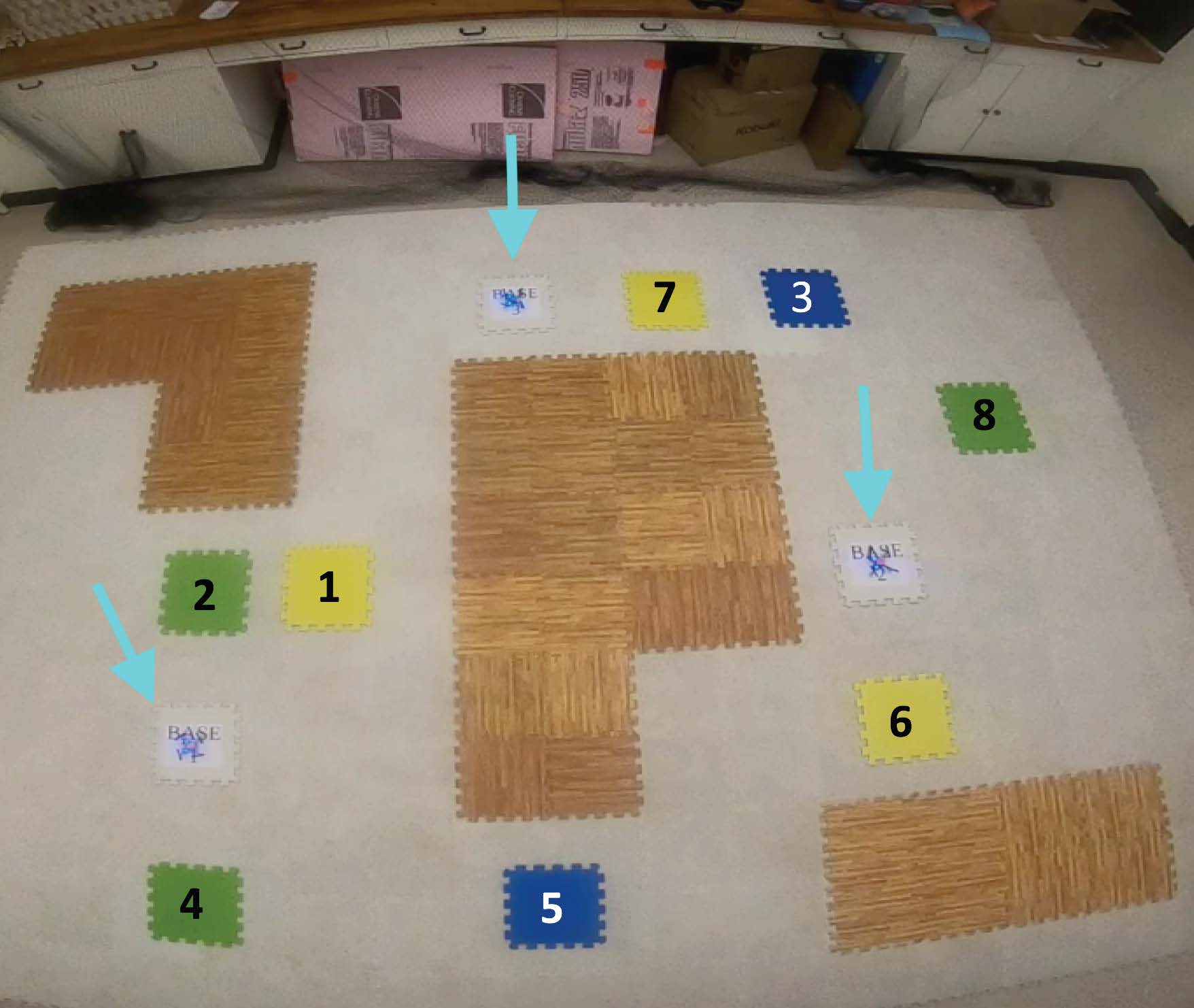}
  \caption{$time=0$ sec}
  \label{fig:sfig1}
\end{subfigure}%
\quad
\begin{subfigure}{.45\textwidth}
  \centering
  \includegraphics[width=\linewidth]{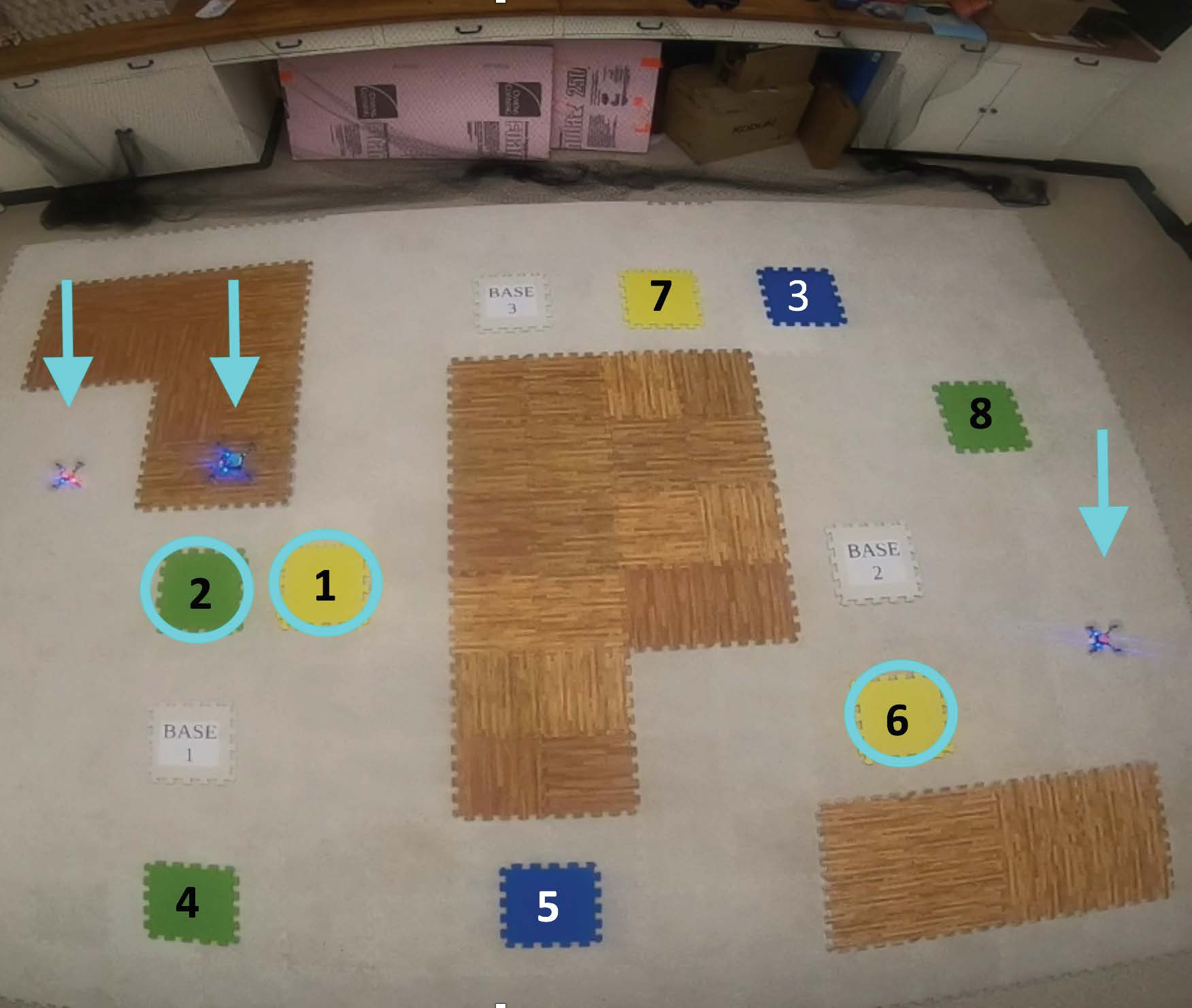}
  \caption{$time=6$ sec}
  \label{fig:sfig2}
\end{subfigure}

\begin{subfigure}{.45\textwidth}
  \centering
  \includegraphics[width=\linewidth]{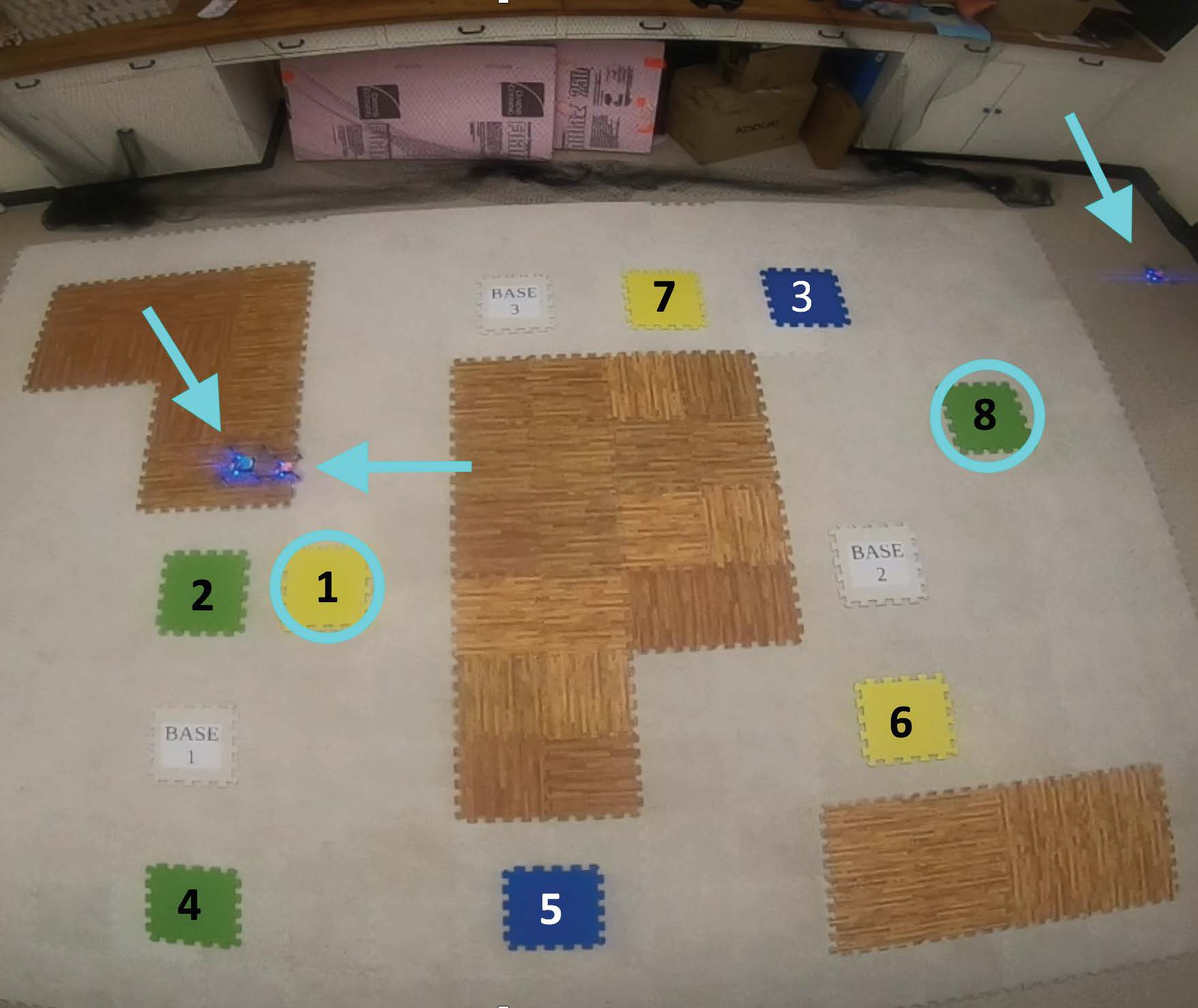}
  \caption{$time=10$ sec}
  \label{fig:sfig3}
\end{subfigure}%
\quad
\begin{subfigure}{.45\textwidth}
  \centering
  \includegraphics[width=\linewidth]{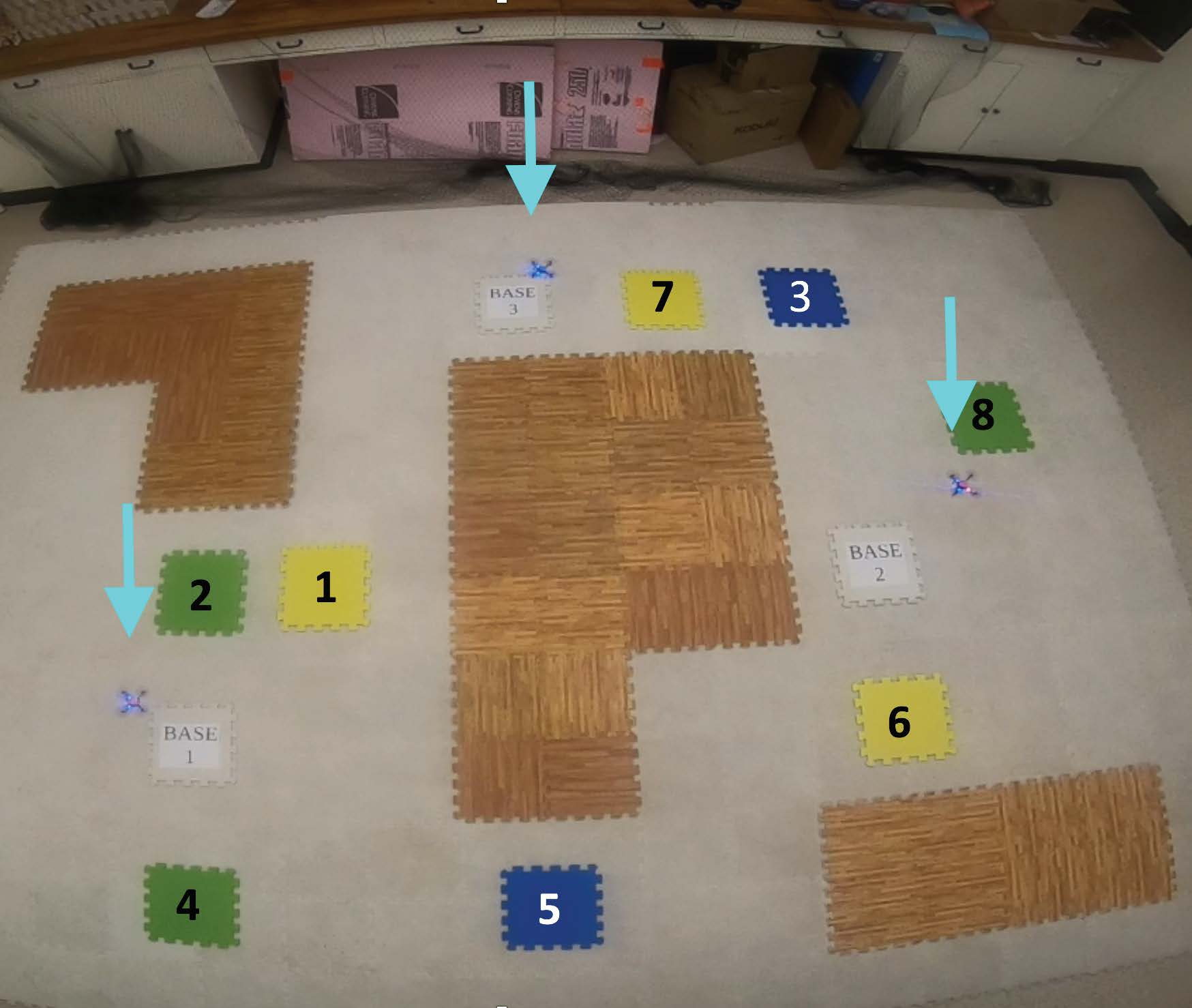}
  \caption{$time=20$ sec}
  \label{fig:sfig4}
\end{subfigure}
\caption{Some instances from episode 1 including 3 drones that are indicated by arrows. In this episode, tasks 1, 2, 6, and 8 arrive and the tasks being served are highlighted in circle. }
\label{fig:real_experiments_1}
\end{figure}

\begin{figure}[htb!]
\centering
\begin{subfigure}{.45\textwidth}
  \centering
  \includegraphics[width=\linewidth]{17.jpg}
  \caption{$time=0$ sec}
  \label{fig:sfig5}
\end{subfigure}%
\quad
\begin{subfigure}{.45\textwidth}
  \centering
  \includegraphics[width=\linewidth]{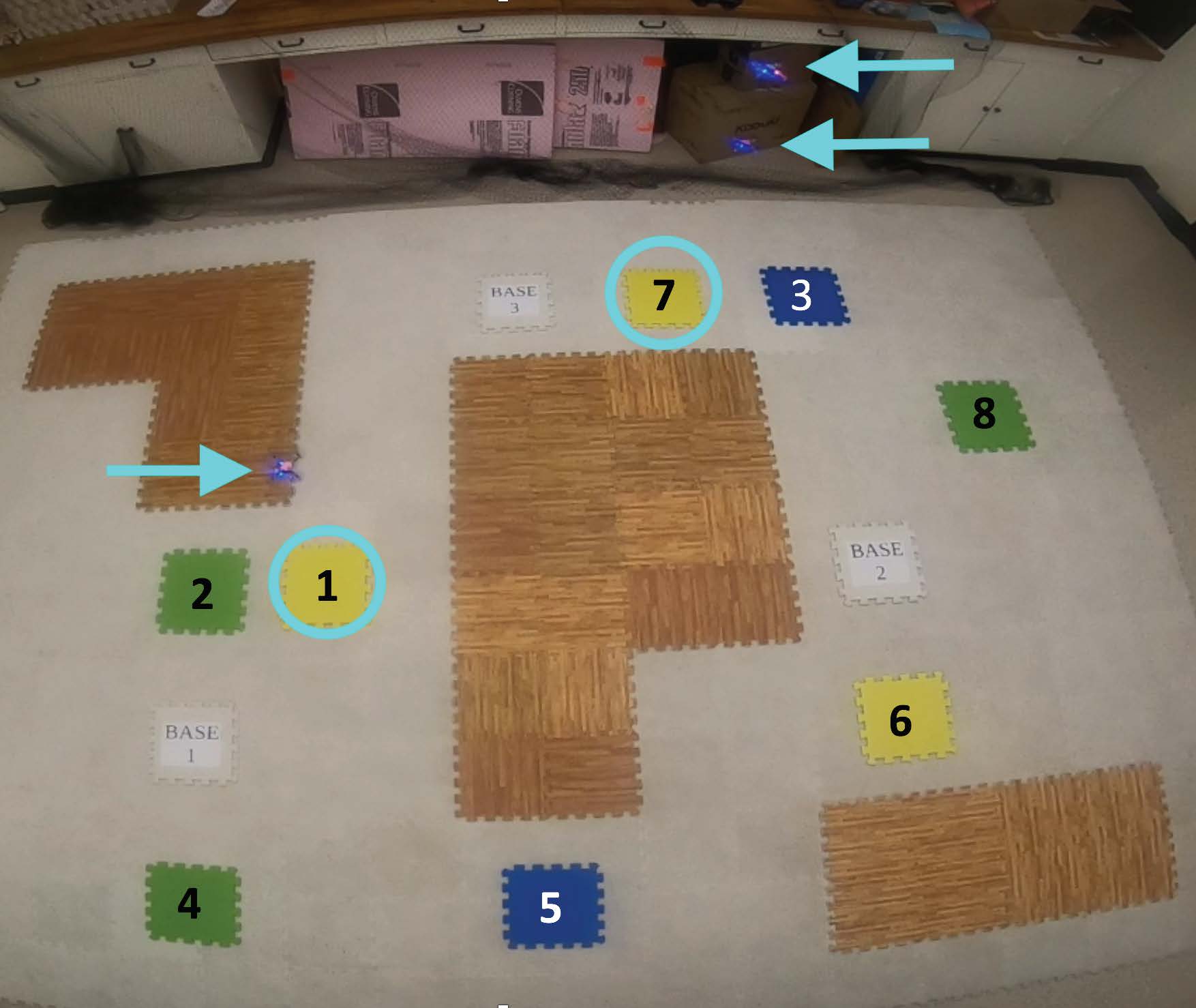}
  \caption{$time=6$ sec}
  \label{fig:sfig6}
\end{subfigure}
\begin{subfigure}{.45\textwidth}
  \centering
  \includegraphics[width=\linewidth]{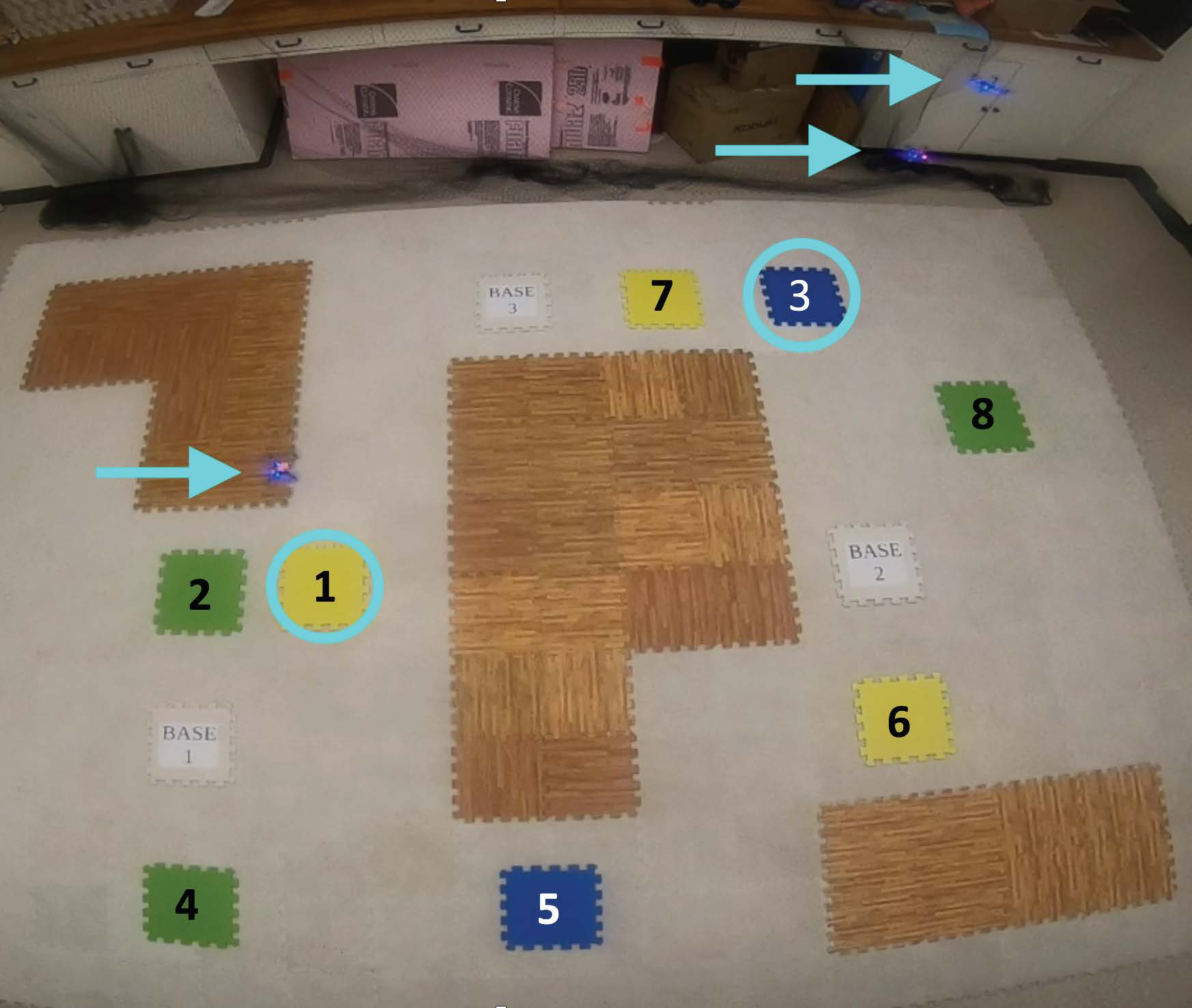}
  \caption{$time=12$ sec}
  \label{fig:sfig7}
\end{subfigure}%
\quad
\begin{subfigure}{.45\textwidth}
  \centering
  \includegraphics[width=\linewidth]{17.jpg}
  \caption{$time=20$ sec}
  \label{fig:sfig8}
\end{subfigure}
\caption{Some instances from episode 2 including 3 drones that are indicated by arrows. In this episode, tasks 1, 3, and 7 arrive and the tasks being served are highlighted in circle. }
\label{fig:real_experiments_2}
\end{figure}

\section{Conclusion}
\label{conc}
In this paper, we presented a game-theoretic approach to the distributed planning of multi-robot systems for providing optimal service to cooperative tasks that are dispersed over space and time. In this setting, each task requires service by sufficiently many robots at the specified location within the specified time window. The robots plan their own trajectories in each episode based on the specifications of incoming tasks. Each robot is required to start and end every episode at its assigned station. We mapped this planning problem to a potential game by setting the utility of each robot to its marginal contribution to the total value from tasks. We presented a systematic way to use the task specifications for designing minimal action sets that include globally optimal joint plans and facilitate fast learning. We then showed that the resulting game can in general have arbitrarily poor Nash equilibria. We also described some special cases where all the equilibria are guaranteed to have bounded suboptimality. We showed how game theoretic learning algorithms such as the best response or the log linear learning can be used to iteratively obtain optimal joint plans in this setting. The performance of the proposed approach was demonstrated via simulations and experimental results.

As a future direction, we plan to extend our work to heterogeneous teams where the robots may have different capabilities and dynamics. Furthermore, we plan to investigate the use of temporal logics for encoding complex cooperative tasks (value functions) to be used in the proposed distributed  planning framework. We are also interested in exploring the use of other distributed learning and optimization algorithms to solve the proposed planing problem. Incorporating additional performance objectives such as the robustness of the total value from tasks to the deviations from generated plans (e.g., due to disturbances or stochastic dynamics) is another direction we intend to pursue.

 \bibliographystyle{spmpsci}  
\bibliography{MyReferences} 

\end{document}